\documentclass[a4paper,oneside,11pt]{article}
\usepackage{a4wide, dsfont}
\usepackage{graphicx}
\usepackage{multirow}
\usepackage[latin1]{inputenc}
\usepackage{amsthm}%
\usepackage{amsmath}%
\usepackage{url}
\usepackage[round]{natbib}
\usepackage{color,soul} 
\usepackage{xcolor}
\usepackage{diagbox}
\usepackage{booktabs}
\usepackage[colorlinks, linkcolor=black, citecolor=black, filecolor=black, urlcolor=blue]{hyperref} 

\usepackage{amssymb}%
\usepackage{booktabs}%
\usepackage{dcolumn}%
\newcolumntype{.}{D{.}{.}{-1}}%
\newcolumntype{Z}{D{.}{.}{2}}%
\newcolumntype{T}{D{.}{.}{3}}
\newcolumntype{V}{D{.}{.}{4}}%

\usepackage[colorinlistoftodos,textsize=tiny]{todonotes}
\newcommand{\Comments}{1}
\newcommand{\mynote}[2]{\ifnum\Comments=1\textcolor{#1}{#2}\fi}
\newcommand{\mytodo}[2]{\ifnum\Comments=1%
	\todo[linecolor=#1!80!black,backgroundcolor=#1,bordercolor=#1!80!black]{#2}\fi}

\ifnum\Comments=1               
\setlength{\marginparwidth}{0.8in}
\fi

\usepackage{fancyhdr}
\usepackage{nicefrac}
\usepackage{marvosym}
\usepackage{enumitem}
\usepackage{ulem}

%


\newcommand{\rP}{{\mathrm P}} 
\newcommand{\PP}{{\mathbb P}} 



\newcommand\UU{ \mathcal{U} } 
\newcommand\LL{ \mathcal{L} } 


\newcommand{\ind}{\text{\bf{1}}} 


\DeclareMathOperator*{\argmin}{argmin}
\DeclareMathOperator*{\argmax}{argmax}

\newcommand{\dd}{\text{\rm d}}             

\newcommand{\cF}{\mathcal{F}}

\newcommand{\cX}{\mathcal{X}}




\newcommand{\rmnum}[1]{{\ifmmode\text{\upshape\scshape\romannumeral#1}\else\textsc{\romannumeral#1}\fi}}


\pagestyle{plain}

\fancyfoot[C]{}



%
%
%
%

\numberwithin{equation}{subsection} 

\theoremstyle{plain}
\newtheorem{theorem}{Theorem}%
\theoremstyle{definition}
\newtheorem{assumption}{Assumption}%
\newtheorem{example}[theorem]{Example}%
\numberwithin{equation}{section}

%

\renewcommand\theta{\vartheta}

\renewcommand{\textfraction}{0.1}	
\renewcommand{\floatpagefraction}{0.8}	

\newcommand{\crr}{c_{\mathrm{r}}}	
\newcommand{\preci}{\mathrm{prec}}

\parindent = 0pt
%
%
\begin{document}
%
%
\LARGE
%
{\title{Robust performance metrics for imbalanced classification problems}
	\author{Hajo Holzmann\\
		\small{Fachbereich Mathematik und Informatik}  \\
		\small{Philipps-Universit\"at Marburg} \\
		\small{holzmann@mathematik.uni-marburg.de}
		\and
		Bernhard Klar\footnote{Corresponding author. Hajo Holzmann, Philipps-Universität Marburg, Fachbereich Mathematik und Informatik, Hans-Meerwein Str.~6, 35043 Marburg,  Germany} \\
		\small{Institut f\"ur Stochastik}  \\
		\small{Karlsruher Institut f\"ur Technologie (KIT)} \\
		\small{bernhard.klar@kit.edu} }
	}
	\maketitle
	%
%
%
%
\normalsize
\begin{abstract}
We show that established performance metrics in binary classification, such as the F-score, the Jaccard similarity coefficient or Matthews' correlation coefficient (MCC), are not robust to class imbalance in the sense that if the proportion of the minority class tends to $0$, the true positive rate (TPR) of the Bayes classifier under these metrics tends to $0$ as well. Thus, in imbalanced classification problems, these metrics favour classifiers which ignore the minority class. To alleviate this issue we introduce robust modifications of the F-score and the MCC for which, even in strongly imbalanced settings, the TPR is bounded away from $0$. We numerically illustrate the behaviour of the various performance metrics in simulations as well as on a credit default data set. We also discuss connections to the ROC and precision-recall curves and give recommendations on how to combine their usage with performance metrics.     
\end{abstract}
%

{\sl Keywords: } F-score, Jaccard similarity coefficient, Matthews correlation coefficient, binary classification, performance metric, imbalanced classificaton problems, ROC curve, precision-recall curve, Bayes classifier

\bigskip

%
%
%
\section{Introduction}

Classifiers in binary classification are often evaluated using performance metrics, among which the probability of correct classification, also called accuracy, is most prominent and is widely used in the more theoretically oriented literature \citep{zbMATH00893887, zbMATH05186310}. 
However, for imbalanced data in which one class is much rarer than the other, it has severe shortcomings in that classifiers with optimal accuracy on small training sets tend to ignore the minority class \citep{Menon:2013}. 
Therefore, a variety of additional performance metrics such as the F-score, the Jaccard similarity coefficient and the Matthews Correlation Coefficient (MCC) are becoming predominant in practice, with a lively debate over which performance metric  to use in a specific situation \citep{Delgado:2019, Chicco:2020,Chicco:2021}. 
Recently it has been shown that for wide classes of performance metrics the associated Bayes classifiers, that is classifiers with maximal values of the performance metric, again have a threshold form in the conditional probability of the positive class given the feature \citep{Koyejo:2014, yan2018binary, Singh:2022}. 
In contrast to accuracy where it is simply $1/2$, this threshold and hence the optimal classifier in general depends on the performance metric as well as the proportion of the positive class and the class-conditional distributions.  

\vspace{2mm}

Imbalanced data often arises when data are sampled from the class conditional distributions and not from the overall joint distribution of the labels and the features. In such settings, the proportion of positive samples among all samples is effectively set by the data analyst who collects the data and does not reflect the actual proportion of the positive class in the population. To deal with such a sampling design, we derive the Bayes classifier in the form of thresholding the density ratio $1(f_1/f_0 > \delta^*)$ of the class-conditional densities $f_i$. The dependence of the threshold $\delta^*$ on the proportion of the positive class $\pi$ in the data, and in particular, the boundedness of $\delta^*$ as $\pi \to 0$, allows us to assess the robustness of the performance metrics and the associated Bayes classifiers under imbalanced designs. 

\vspace{2mm}

Unfortunately, it turns out that popular metrics like MCC, F-score or Jaccard similarity coefficient are not robust against class imbalance in this  sense
even in the simple setting of linear discrimination analysis (LDA): the threshold $\delta^*$ becomes very large or even tends to infinity as  $\pi \to 0$. To alleviate this issue we propose robust modifications of the MCC and F-score with tuning parameters which allow to adjust the dependence of the optimal threshold $\delta^*$ on the proportion $\pi$. 

\vspace{2mm}

The rest of the paper is organized as follows. We start in Section \ref{sec:motivation} with a numerical illustration in the setting of LDA. First we make the point that the Bayes classifier, which is the target of any training algorithm, and hence its associated optimal confusion matrix depends on the choice of the metric as well as the underlying characteristics of the classification problem. Therefore the effect of the choice of a performance metric should not be assessed by simply applying various metrics to given confusion matrices. Next, we illustrate the non-robustness of popular metrics for small values of $\pi$, and also show the performance of the robust modifications that we propose. Section \ref{sec:binclassbascis} sets up the theoretical framework. In Section \ref{sec:perfmetrics}, we begin by deriving an equation for the threshold $\delta^*$ of the Bayes classifier in terms of the density ratio $1(f_1/f_0 > \delta^*)$. Then we show the non-robustness of MCC, F-score and JAC for the population quantities in LDA and QDA. In Section \ref{robust-metrics}, we define robust performance metrics and introduce the robust modifications of the F-score (Section \ref{ex:fgen}) as well as the MCC (Section \ref{ex:mccgen}). In Section \ref{sec:rocprecisionrecall}, we discuss connections to the ROC and the precision-recall curves \citep{davis2006relationship, gneiting2022receiver}. As a slight variation, we recommend using plots of recall against 1-precision to make these curves better comparable to ROC curves. Section \ref{data-ex} contains an application of the proposed methodology to a credit default data set. We conclude in Section \ref{sec:conclude} by providing some general recommendations for the use of performance metrics in combination with ROC and precision-recall curves. The appendix includes theoretical derivations and additional numerical results. 






\section{Performance metrics and class imbalance: Numerical illustrations}\label{sec:motivation}


Let us start with a numerical illustration of the main points to be made in this work. When evaluating classifiers one has to be aware that the optimal classifier and hence its associated confusion matrix depends on the choice of the performance measure such as the F-score or the MCC. Thus, the effect of using different performance measures cannot be assessed by simply computing and comparing them for given confusion matrices. Rather, the starting point has to be the distribution of the data to be classified.    

In Example \ref{ex-scen1} we illustrate this in the approximately balanced setting for the popular metrics MCC and $F_\beta$ score. Then in Example \ref{ex-scen2} we proceed to the behaviour of the optimal classifiers for these metrics when data become increasingly imbalanced. 

\begin{example}[Approximately balanced case] \label{ex-scen1}
Group 1: $X_{1j}\sim \mathcal N(\mu_1,\Sigma), \, j=1,\ldots,n_1,$ with $n_1=3000$ and
$$
\mu_1 = \begin{pmatrix} 2 \\ 2 \end{pmatrix}, \quad
\Sigma = \begin{pmatrix} \sigma^2 & \sigma\tau\rho \\
                         \sigma\tau\rho & \tau^2  \end{pmatrix}, 
\quad \text{where } \sigma=2, \tau=1, \rho=0.5.
$$
Group 0: $X_{0j}\sim \mathcal N(\mu_0,\Sigma), \, j=1,\ldots,n_0,$ with $n_0=7000$ and $\mu_0=(0,0)^\top$.

\smallskip
We apply logistic regression to obtain an estimate $\hat \eta(x)$ of the regression function and use three different classifiers: $\phi_1$ classifies as positive (group 1) if  $\hat \eta(x)\geq 0.244$, $\phi_2$ decides for group 1 if $\hat \eta(x)\geq 0.436$, and $\phi_3$ uses a cutoff probability of $0.636$.
These bounds are asymptotically optimal for the performance measures that we shall consider, namely the $F_\beta$-score with $\beta=1.5$ and $0.5$, as well as the MCC, which we shall formally introduce and investigate in Section \ref{sec:binclassbascis}.
The three empirical confusion matrices for $\phi_1, \phi_2,\phi_3$ are given in Table \ref{tab-scen0}.
\begin{table}[ht]
\begin{center}
\begin{tabular}{cc|cc}
    && \multicolumn{2}{c}{$\phi_1$} \\
    & & 1 & 0 \\ \hline
\multirow{2}{*}{$Y$} & 1 & 2640 & 360 \\
                     & 0 & 1352 & 5648  \\
\end{tabular} \qquad
\begin{tabular}{cc|cc}
    && \multicolumn{2}{c}{$\phi_2$} \\
    & & 1 & 0 \\ \hline
\multirow{2}{*}{$Y$} & 1 & 2289 & 711 \\
                     & 0 & 673  & 6327  \\
\end{tabular} \qquad
\begin{tabular}{cc|cc}
    && \multicolumn{2}{c}{$\phi_3$} \\
    & & 1 & 0 \\ \hline
\multirow{2}{*}{$Y$} & 1 & 1879 & 1121 \\
                     & 0 & 329  & 6671 \\
\end{tabular}
\caption{Empirical confusion matrices for the classifiers in Example \ref{ex-scen1}. \label{tab-scen0}}
\end{center}
\end{table}
Next we apply the $F_\beta$-score with $\beta=1.5$ and $0.5$ as well as the MCC to these classifiers. The results can be found in Table \ref{tab-scen1}. If we choose $F_{1.5}$ for evaluation, the classifier $\phi_1$ with cutoff probability 0.244 performs best, followed by $\phi_2$. Choosing $F_{0.5}$ as the performance metric, the order is reversed, and $\phi_3$ performs best. Finally, under MCC, $\phi_2$ with cutoff 0.436 outperforms the other classifiers. The distinct optimal classifiers for these performance measures induce distinct confusion matrices. 

\begin{table}[ht]
\centering
\begin{tabular}{r|rrr}  \hline
 & $F_{1.5}$ & MCC & $F_{0.5}$  \\  \hline
$\phi_1$ & 0.80 & 0.64 & 0.70 \\ 
$\phi_2$ & 0.77 & 0.67 & 0.77 \\ 
$\phi_3$ & 0.68 & 0.64 & 0.79 \\ 
   \hline
\end{tabular}
\caption{Values of different performance measures for the three classifiers in Example \ref{ex-scen1}. \label{tab-scen1}}
\end{table}


\end{example}

\begin{example}[Imbalanced case] \label{ex-scen2}
Here, we consider the behavior of optimal classifiers for different performance metrics in increasingly more imbalanced classification tasks. We use the same scenario as in Example \ref{ex-scen1}, but with a total sample size of $10^5$.
We start with $n_1=30000$ and $n_0=70000$, so $\hat\pi=n_1/(n_1+n_0)=0.3$. Then we reduce $\hat\pi$ over $0.1,0.05$ and $0.025$ to $\hat\pi=0.01$.

For each sample, we estimate the regression function and determine the optimal thresholds $\tilde\delta$ that maximize the three metrics used in Example \ref{ex-scen1}, using a grid from 0.001 to 0.999 with step size 0.001. We classify  as positive (group 1) if  $\hat \eta(x)\geq \tilde \delta$, where $\hat \eta(x)$ is obtained from logistic regression and $\tilde \delta$ depends on the metric. 
In order to assess whether class $1$ can still be detected under this (optimal) classifier for a given metric in imbalanced settings, we then calculate the empirical true positive and negative rates $n_{ii}/n_i$, where $n_{ii}$ is the number of cases correctly classified as group $i \, (i=1,0)$. 

The values of the metrics, the threshold values $\tilde \delta$ and the empirical true positive and negative rates $n_{ii}/n_i$ are shown in the first four columns of Table \ref{tab-scen2}. While $\tilde \delta$ does decrease for  decreasing  values of $\hat\pi$, which is desirable for imbalanced classification problems in order to be able to detect class $1$, the true positive rate $\hat\pi_{1|1}$ still decreases to small values while the true negative rate 
 $\hat\pi_{0|0}$ increases to $1$ as $\hat\pi$ becomes small. In Section \ref{sec:binclassbascis} we argue that it is more informative  to compute the threshold values $\delta$ in terms of ratios $f_1/f_0$ of densities $f_i$ of class conditional distributions than for the regression functions in order to make them comparable for distinct proportions $\hat\pi$. See \eqref{eq:relationthreshold} for the relation between the  values of $\delta$ and $\tilde \delta$. Table \ref{tab-scen2} shows that the corresponding threshold values in terms of density ratios actually strongly increase, indicating that all three metrics fail to solve the problem of handling imbalanced data. 

We address this issue by proposing robust versions of the $F$-score and the MCC in Section \ref{robust-metrics}. 
The last three columns of Table \ref{tab-scen2} show the results in the above setting for these metrics. We note that here, $\tilde\delta$ decreases much faster and the corresponding values of $\delta$ remain bounded, so that a high true positive rate $\hat\pi_{1|1}$ is still achieved.



\begin{table}
\centering
\begin{tabular}{c|c|ccc|ccl} \hline
 \addlinespace[0.5mm]
 $\hat{\pi}$ && $F_{1.5}$ & MCC & $F_{0.5}$ & $F_{\text{rb}}$ & $\text{MCC}_{\text{rb}}^{0.1}$   &$\text{MCC}_{\text{rb}}^{0.05}$  \\  \addlinespace[0.5mm] \hline \addlinespace[0.5mm]
     & value           & 0.794 & 0.665 & 0.797 & 0.788 & 0.667 & 0.665 \\ 
     & $\tilde \delta$ & 0.250 & 0.426& 0.644 &0.257 &0.362  &0.362 \\
0.3  & $\hat\pi_{1|1}$ & 0.871 &0.767 &0.612 &0.867 &0.807  &0.807 \\
     & $\hat\pi_{0|0}$ & 0.810 &0.899 &0.958 &0.814 &0.873  &0.873 \\ 
     & $\delta$   & 0.778 & 1.732 & 4.221 & 0.807 & 1.324  &1.324 \\ \hline \addlinespace[0.5mm]   
     &  value          & 0.640 & 0.570 & 0.667 & 0.680 & 0.597 & 0.584 \\ 
     & $\tilde \delta$ & 0.206 &0.358 &0.526 &0.124 &0.206  &0.221 \\
0.1  & $\hat\pi_{1|1}$ & 0.723 &0.579 &0.445 &0.814 &0.723  &0.707 \\
     & $\hat\pi_{0|0}$ & 0.923 &0.964 &0.985 &0.870 &0.923  &0.929 \\ 
     & $\delta$   & 2.335 & 5.019 & 9.987 & 1.274 & 2.335  &2.553 \\ \hline \addlinespace[0.5mm]
     & value           & 0.540 & 0.497 & 0.575 & 0.631 & 0.549 & 0.524 \\ 
     & $\tilde \delta$ & 0.179 &0.267 &0.496 &0.072 &0.132  &0.141 \\ 
0.05 & $\hat\pi_{1|1}$ & 0.612 &0.513 &0.319 &0.789 &0.678  &0.667 \\
     & $\hat\pi_{0|0}$ & 0.957 &0.976 &0.993 &0.883 &0.937  &0.941 \\ 
     & $\delta$   & 4.143 & 6.921 & 18.698 & 1.474 & 2.889  &3.119 \\ \hline \addlinespace[0.5mm]
     & value           & 0.444 & 0.418 & 0.489 & 0.604 & 0.519 & 0.480 \\ 
     & $\tilde \delta$ & 0.135 &0.242 &0.363 &0.038 &0.052  &0.084 \\
0.025& $\hat\pi_{1|1}$ & 0.532 &0.399 &0.290 &0.784 &0.727  &0.637 \\
     & $\hat\pi_{0|0}$ & 0.971 &0.988 &0.995 &0.888 &0.916  &0.949 \\ 
     & $\delta$   & 6.087 & 12.451 & 22.224 & 1.541 & 2.139  &3.576 \\ \hline \addlinespace[0.5mm]
     & value           & 0.344 & 0.321 & 0.382 & 0.592 & 0.508 & 0.454 \\ 
     & $\tilde \delta$ & 0.099 &0.166 &0.318 &0.018 &0.025  &0.036 \\
0.01 & $\hat\pi_{1|1}$ & 0.429 &0.315 &0.183 &0.764 &0.704  &0.641 \\
     & $\hat\pi_{0|0}$ & 0.986 &0.994 &0.998 &0.904 &0.929  &0.951 \\ 
     & $\delta$   & 10.878 & 19.705 & 46.161 & 1.815 & 2.538  &3.697 \\\hline 
     \end{tabular}
\caption{Values of metrics, values of optimal cutoff $\tilde\delta$ (in terms of regression function) and $\delta$ (in terms of density ratio) and class-conditional probabilities of classifiers that empirically maximize the various performance measures used in Example \ref{ex-scen2}. \label{tab-scen2}}
\end{table}

\end{example}

%
%
%
%
%
%
%
%
%
%
%

\section{Binary classification and performance metrics}\label{sec:binclassbascis}

\subsection{Binary classification}

Let us formally introduce the setting of binary classification. Let the random vector $(X,Y)\in \cX \times \{0,1\}$ contain the label $Y$ as well as the feature vector $X$, also called vector of explanatory variables, which may take values in some abstract space $\cX$. The joint distribution of $(X,Y)$ will be called the population distribution. It is determined by the class conditional distributions
$\rP_i = \PP(X | Y = i)$, $i=0,1,$ together with  
the unconditional probability $\pi = \PP(Y=1)$ of class $1$. 

First suppose that we may sample from the population distribution of $(X,Y)$, resulting in the training sample $(X_1,Y_1) \ldots, (X_n,Y_n)$. 
A classifier is a (measurable) map from $\cX$ to $\{0,1\}$ depending on the training sample. In the following we suppress this dependence and simply write $\phi: \cX \to \{0,1\}$. Probabilities and expected values then are computed conditionally on the training sample.  
 Writing
$$\pi_{jk} = \pi_{jk}(\phi)=\PP(Y=j,\phi (X)=k)$$
for the joint probabilities of $Y$ and $\phi(X)$, the $2\times 2$ - matrix $C_\phi=(\pi_{jk}(\phi))_{j,k=1,0}$ is called  population confusion matrix of the classifier $\phi$, summarized in Table \ref{tab-scen00}.
\begin{table}[ht]
\begin{center}
\begin{tabular}{cc|cc|c}
    && \multicolumn{2}{c}{$\phi(X)$} & \\
    & & 1 & 0 & Total \\ \hline
\multirow{2}{*}{$Y$} & 1 & $\pi_{11}(\phi)$  & $\pi_{10}(\phi)$ & $\pi$  \\
  & 0 & $\pi_{01}(\phi)$ & $\pi_{00}(\phi)$ & $1-\pi$ \\ \hline
  & Total & $\gamma$  & $1-\gamma$ & 1
\end{tabular}
\caption{Population confusion matrix of a classifier. \label{tab-scen00}}
\end{center}
\end{table}%

If we evaluate the classifier on a test data set we obtain empirical quantities $\hat \pi_{jk}(\phi)$ and the corresponding empirical confusion matrix $\hat C_\phi=(\hat \pi_{jk}(\phi))_{j,k=1,0}$ as in Section \ref{sec:motivation}. 

The (population) sensitivity of a classifier $\phi$, also called recall or true positive rate is the probability that observations from the positive class are actually detected as positive by the classifier. Formally,  
$$\pi_{1|1}(\phi) : =\pi_{11}/\pi = \PP( \phi (X)=1|Y=1 ) = \rP_1(\phi=1).$$
Note that in contrast to the true positive probability $\PP(Y=1,\phi (X)=1)$, for a given classifier $\phi$ it can be computed merely using $\rP_1$, while the  weight $\pi$ of the positive class is not required. Similarly, the specificity (true negative rate) is
$$\pi_{0|0}(\phi) : =\pi_{00}/(1-\pi) = \PP( \phi (X)=0|Y=0 ) = \rP_0(\phi=0).$$
Now for imbalanced data with small $\pi$, training may result in classifiers for which $\{ \phi=1\}$ is a small set and hence the sensitivity $\pi_{1|1}(\phi)$ then is also small, so that the positive class can no longer be detected. 
Consider regression-thresholding classifiers which have the form 
%
\begin{align*}
 \phi_{\tilde \delta}(x) &= \ind (\eta(x) > \tilde \delta ), 
\end{align*}
where $ \eta(x) = \PP(Y=1 \mid X=x)$, $x \in \cX$ is the regression function and $\tilde \delta \in (0,1)$ is a threshold. $\eta$ becomes small with small values of $\pi$, and if $\tilde \delta$ is too large, $\eta(x) > \tilde \delta $ only rarely occurs. Suppose that the class-conditional probabilities have densities $f_i$ w.r.t.~some $\sigma$-finite dominating measure $\mu$.
Then we can express the regression function in terms of the density ratio $f_1(x)/f_0(x)$ and vice versa by using the proportion $\pi$ as 
\begin{equation*} 
 \eta(x) = \frac{\pi\, f_1(x)/f_0(x)}{\pi f_1(x)/f_0(x) + (1-\pi) },\qquad \frac{f_1(x)}{f_0(x)} = \frac{(1-\pi)\, \eta(x)}{\pi\, (1-\eta(x))}, 
\end{equation*}
which implies for the  regression threshold classifier that
\begin{equation}\label{eq:relationthreshold}
    \eta(x)> \tilde \delta \quad \Longleftrightarrow \quad \frac{f_1(x)}{f_0(x)} > \delta \qquad \text{for} \quad \delta = \frac{(1-\pi)\, \tilde \delta}{\pi \, (1-\tilde \delta)},\quad \tilde \delta = \frac{\delta \pi}{\pi\,\delta + (1-\pi)}. 
\end{equation}
The representation of the thresholding classifier as $\phi_{\delta,\mathrm d}(x) = \ind (f_1(x)/f_0(x) >  \delta )$ has the advantage that only the threshold $\delta$ may depend on the proportion $\pi$, which makes the effect of $\pi$ on the set $\{ \phi_{\delta,\mathrm d} =1\}$ much more explicit: If $\delta$ gets large as $\pi$ decreases, this will make the set $\{ \phi_{\delta,\mathrm d} =1\}$ and hence the sensitivity small. 


\subsection{Performance metrics}  
Performance metrics are used to measure the quality of classifiers $\phi$. Such metrics are functions 
$ \widetilde M\big(\pi_{11}, \pi_{01}, \pi_{10}, \pi_{00}\big) $ in the parameters of the confusion matrix of $\phi$ (either the population or an empirical version). We adopt the convention that higher values are desirable. 
Common examples include the following, where we use population quantities, suppress dependence on $\phi$ in $\pi_{jk} = \pi_{jk}(\phi)$ and denote $\gamma = \pi_{11} + \pi_{01}$:
\begin{itemize}
  \item {\itshape accuracy} $\pi_{11}+\pi_{00}$ and {\itshape weighted accuracy} $(w\, \pi_{11}+(1-w)\, \pi_{00})$, $w \in (0,1)$,
  \item {\itshape balanced accuracy} $(\pi_{11}/\pi + \pi_{00}/(1-\pi))/2$, the average of sensitivity and specificity,
  \item  {\itshape Jaccard similarity coefficient} $\frac{\pi_{11}}{\gamma+\pi_{10}}$,
  \item {\itshape $F_{\beta}$-score} $(1+\beta^2)\, \frac{\pi_{11}}{\beta^2 \pi + \gamma}$, 
    \item {\itshape Matthews Correlation Coefficient (MCC)} 
    $
 \frac{\pi_{11}-\gamma\pi}{\sqrt{\gamma(1-\gamma)\pi(1-\pi)}}.
$
\end{itemize}
There exists a multitude of further criteria; a list of 76 binary similarity and distance measures can be found in \cite{Choi:2010}.
For obtaining suitable performance metrics the function $ \widetilde M\big(\pi_{11}, \pi_{01}, \pi_{10}, \pi_{00}\big) $ needs to satisfy certain properties. A quite general condition, suggested in \cite{Singh:2022}, 
is that  
\begin{equation}\label{eq:confmatrixall}
\widetilde M\big(\pi_{11}, \pi_{01}, \pi_{10}, \pi_{00}\big) \leq \widetilde M\big(\pi_{11} +\epsilon_1, \pi_{01} -\epsilon_2, \pi_{10} -\epsilon_1, \pi_{00}+\epsilon_2\big)
\end{equation}
for $\epsilon_1\in [0,\pi_{10}]$ and $\epsilon_2\in [0,\pi_{01}]$: Increasing true positives and true negatives when keeping overall positives, that is $\pi = \pi_{11} + \pi_{10}$ (and hence overall negatives) constant is always desirable and should lead to higher performance values. 

While $\widetilde M$ is just a function of four parameters, the confusion matrix $C_\phi$ arises from a given classifier $\phi$, so that   
$$ \widetilde M\big(\pi_{11}(\phi), \pi_{01}(\phi), \pi_{10}(\phi), \pi_{00}(\phi)\big) = \widetilde M(C_\phi) $$
is the value of the performance metric of $\phi$. A Bayes classifier for the performance metric associated with $\widetilde M$ satisfies
$$ \phi^*_{{\tiny \widetilde M}} \in \argmin_{\phi} \widetilde M(C_\phi) .$$

\cite{Singh:2022} showed that if the metric satisfies \eqref{eq:confmatrixall}, and if additionally  $\eta(X)$ is absolutely continuous, 
there exists a Bayes classifier which is a regression-thresholding classifier
%
 $\phi_{ \tilde \delta^*}(x) = \ind (\eta(x) > \tilde \delta^* )$, 
%
where, as already illustrated in Section \ref{sec:motivation}, the optimal threshold $\tilde \delta^*$ and hence the Bayes classifier will depend on the metric $\widetilde M$, and in general also on the parameters of the population distribution of $(X,Y)$, so that $\tilde \delta^* = \tilde \delta^* (\widetilde M, \pi, \rP_0, \rP_1)$. 
Using \eqref{eq:relationthreshold} we rewrite this classifier in terms of the density ratio as
\begin{align}
\phi_{ \delta^*, \mathrm d}(x) &= \ind (f_1(x)/f_0(x) >  \delta^* ), \label{eq:Bayes3} 	
\end{align}
and investigate the dependence of $\delta^* = \delta^* (\widetilde M, \pi, \rP_0, \rP_1)$, in particular on $\pi$ and $\widetilde M$. 

 
%


\section{Performance metrics and classification under class imbalance } \label{sec:perfmetrics}


Let us begin by determining the optimal threshold $\delta^*$ in \eqref{eq:Bayes3}. We shall follow the arguments in \citet{yan2018binary}, but present them for the threshold classifier in terms of the density ratio instead of the regression function. This way we show that the analysis does not require the joint population distribution of $(X,Y)$, but rather only the class-conditional distributions $\rP_i$, while the proportion $\pi$ can be treated as an exogenously given parameter. This is useful if, as in Examples \ref{ex-scen1} and \ref{ex-scen2}, a sample from the population distribution is not available and only samples from each class  conditional distribution can be obtained, that is
\begin{equation}\label{eq:samplingcond}
X_{0,1}, \ldots, X_{0,n_0} \ \text{ i.i.d. } \sim \rP_0,\qquad X_{1,1}, \ldots, X_{1,n_1} \ \text{ i.i.d. } \sim \rP_1.
\end{equation}
Considering 
$$(X_{0,1},0), \ldots, (X_{0,n_0},0), (X_{1,1},1), \ldots, (X_{1,n_1},1)$$
as sampled from the overall population would lead to $\pi = n_1 / (n_0 + n_1)$. However, this is no population quantity as $n_0$ and $n_1$ are chosen by the data analyst. Often, collecting data from the null distribution is relatively easy, so that $n_0$ can be taken quite large, while data in the population of positives is costly to obtain, overall resulting in imbalanced samples.   
We rewrite the confusion matrix in the sensitivity $\pi_{1|1}$, specificity $\pi_{0|0}$ and proportion of positives $\pi$ as 
\begin{center}
	\begin{tabular}{cc|cc|c}
		&& \multicolumn{2}{c}{$\phi(X)$} & \\
		& & 1 & 0 & Total \\ \hline
		\multirow{2}{*}{$Y$} & 1 & $\pi_{1|1}\, \pi$  & $(1-\pi_{1|1})\, \pi$ & $\pi$  \\
		& 0 & $(1-\pi_{0|0}) \, (1-\pi)$ & $\pi_{0|0}\, (1-\pi)$ & $1-\pi$ \\ \hline
		& Total & $\gamma$  & $1-\gamma$ & 1
	\end{tabular}
\end{center}
and parametrize the performance metric in these parameters as
\begin{equation}\label{eq:confmatmeas}
M(\pi_{1|1}, \pi_{0|0}, \pi) = \widetilde M\big(\pi_{1|1}\, \pi, (1-\pi_{|0})\, (1-\pi), (1-\pi_{1|1})\pi,\pi_{0|0} (1-\pi)\big),
\end{equation}
see also \citet{narasimhan2014statistical}. Note again that $\pi_{1|1}$ and $\pi_{0|0}$ are accessible solely from $\rP_1$ and $\rP_2$, that is a sample of the form \eqref{eq:samplingcond}. 





%

\begin{assumption}\label{ass:boundedderv}
    Assume that the partial derivatives
    $$\partial_{\pi_{1|1}} M(\pi_{1|1}, \pi_{0|0}, \pi) \qquad \text{and} \qquad \partial_{\pi_{0|0}} M(\pi_{1|1}, \pi_{0|0}, \pi)$$
    are strictly positive for all $\pi_{1|1}, \pi_{0|0}, \pi \in (0,1)$. 
\end{assumption}

%


Indeed, higher values of the true positive and the true negative rates should increase utility. Note that this corresponds to the monotonicity as assumed in \eqref{eq:confmatrixall}. 
Since $\pi = \pi_{11} + \pi_{10} = (\pi_{11} + \epsilon_1) + (\pi_{10} - \epsilon_1)$ remains fixed, we must have that $\pi_{1|1}$ is increased to $\pi_{1|1} + \epsilon_1/\pi$, and similarly for $\pi_{0|0}$. 

\begin{theorem}\label{th:decisiobnounddens}
    Under Assumption \ref{ass:boundedderv}, any Bayes classifier 
    $$ \phi^* \in \argmax_\phi M\big(\pi_{1|1}(\phi),\pi_{0|0}(\phi),\pi \big)$$
    is of the form 
    $$ \phi^*(x) = \phi_{\delta^*}(x) = \left\{ \begin{array}{lr}
        1, \quad  & f_1(x) > \delta^* f_0(x), \\
        0, \quad  & f_1(x) < \delta^* f_0(x),
        \end{array}\right.$$
     where $\delta^*$ is determined by the fixed-point equation
     \begin{equation}\label{eq:formofdelta}
     \delta^* = \frac{\partial_{\pi_{0|0}} M\big(\pi_{1|1}(\phi_{\delta^*}),\pi_{0|0}(\phi_{\delta^*}),\pi \big)}{\partial_{\pi_{1|1}} M\big(\pi_{1|1}(\phi_{\delta^*}),\pi_{0|0}(\phi_{\delta^*}),\pi \big)}.
     \end{equation}
\end{theorem}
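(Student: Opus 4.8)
\emph{Proof plan.} I would run the standard first-order (variational) argument for optimal classifiers, in the spirit of \citet{yan2018binary}, but carried out directly on the class-conditional densities so that only $\rP_0,\rP_1$ and the exogenous $\pi$ enter. The starting point is that for a classifier $\phi$ with acceptance region $\{\phi=1\}$,
$$\pi_{1|1}(\phi)=\int_{\{\phi=1\}} f_1\,\dd\mu,\qquad \pi_{0|0}(\phi)=1-\int_{\{\phi=1\}} f_0\,\dd\mu,$$
so, abbreviating $M(\phi):=M\bigl(\pi_{1|1}(\phi),\pi_{0|0}(\phi),\pi\bigr)$, the metric depends on $\phi$ only through the two masses $\int_{\{\phi=1\}}f_1\,\dd\mu$ and $\int_{\{\phi=1\}}f_0\,\dd\mu$, and hence only through $\rP_0,\rP_1$ and $\pi$.

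\emph{Step 1 (local variations of the acceptance region).} Fix a Bayes classifier $\phi^*$ and put $u^*=\pi_{1|1}(\phi^*)$, $v^*=\pi_{0|0}(\phi^*)$. One first checks $u^*,v^*\in(0,1)$ (boundary values are not attained at an optimum for the metrics considered); then, $M$ being continuously differentiable, $\delta^*:=\partial_{\pi_{0|0}}M(u^*,v^*,\pi)/\partial_{\pi_{1|1}}M(u^*,v^*,\pi)$ is well defined and, by Assumption~\ref{ass:boundedderv}, lies in $(0,\infty)$. For a measurable $A\subseteq\{\phi^*=0\}$ let $\phi_A$ be obtained from $\phi^*$ by switching the label on $A$ from $0$ to $1$; then $\pi_{1|1}(\phi_A)=u^*+\int_A f_1\,\dd\mu$ and $\pi_{0|0}(\phi_A)=v^*-\int_A f_0\,\dd\mu$, so a first-order expansion gives
$$M(\phi_A)-M(\phi^*)=\partial_{\pi_{1|1}}M(u^*,v^*,\pi)\Bigl(\int_A f_1\,\dd\mu-\delta^*\!\int_A f_0\,\dd\mu\Bigr)+o\Bigl(\int_A(f_0+f_1)\,\dd\mu\Bigr).$$
Optimality of $\phi^*$ forces the left-hand side to be $\le0$ for every such $A$; the symmetric variation with $A\subseteq\{\phi^*=1\}$ (switching $1\to0$) gives the corresponding inequality with the opposite sign.

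\emph{Step 2 (threshold form and the fixed-point equation).} Suppose $f_1>\delta^* f_0$ on a non-$\mu$-null subset of $\{\phi^*=0\}$. Shrinking it, one obtains $B\subseteq\{\phi^*=0\}$ of positive measure with $\int_B(f_1-\delta^* f_0)\,\dd\mu>0$ on which $f_0$ is bounded away from $0$ and $f_0+f_1$ is bounded; choosing $A\subseteq B$ with $\mu(A)\to0$ makes the bracketed term $\gtrsim\mu(A)$ while the remainder is $o(\mu(A))$, so the displayed difference is eventually strictly positive, contradicting Step~1. Hence $f_1\le\delta^* f_0$ $\mu$-a.e.\ on $\{\phi^*=0\}$, and symmetrically $f_1\ge\delta^* f_0$ $\mu$-a.e.\ on $\{\phi^*=1\}$. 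Up to $\mu$-null sets this means $\{f_1>\delta^* f_0\}\subseteq\{\phi^*=1\}$ and $\{f_1<\delta^* f_0\}\subseteq\{\phi^*=0\}$, i.e.\ $\phi^*$ has exactly the asserted form with threshold $\delta^*$; and since $(u^*,v^*)=(\pi_{1|1}(\phi^*),\pi_{0|0}(\phi^*))=(\pi_{1|1}(\phi_{\delta^*}),\pi_{0|0}(\phi_{\delta^*}))$, the defining relation for $\delta^*$ is precisely the fixed-point equation~\eqref{eq:formofdelta}.

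\emph{Where the work is.} The delicate points are: (a) upgrading the remainder in Step~1 to be uniformly $o(\mu(A))$ as $A$ shrinks (not merely $o(1)$), which needs $M\in C^1$ together with local integrability of the $f_i$ and is exactly why the variation is restricted to sets $B$ on which the densities are controlled; (b) the bookkeeping on $\{f_0=0<f_1\}$ and $\{f_1=0<f_0\}$, and ruling out the boundary cases $u^*\in\{0,1\}$, $v^*\in\{0,1\}$; and (c) the tie set $\{f_1=\delta^* f_0\}$, on which a variation moves $(\pi_{1|1},\pi_{0|0})$ along the direction $(\delta^*,-1)$ and therefore leaves $M$ stationary to first order --- so the argument identifies $\phi^*$ only modulo that set, which is $\mu$-null (and the Bayes classifier then essentially unique) precisely when $f_1(X)/f_0(X)$, equivalently $\eta(X)$, is absolutely continuous, the regularity already invoked for representation~\eqref{eq:Bayes3}. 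Geometrically, Steps~1--2 just say that at the optimum the gradient $(\partial_{\pi_{1|1}}M,\partial_{\pi_{0|0}}M)$ is an outward normal to the concave attainable region $\{(\pi_{1|1}(\phi),\pi_{0|0}(\phi)):\phi\}$, whose upper boundary is traced by the likelihood-ratio tests $\phi_{\delta}$ with exchange rate $\dd\pi_{0|0}/\dd\pi_{1|1}=-1/\delta$; organising the proof around this ROC picture via the Neyman--Pearson lemma, as in \citet{yan2018binary}, is the natural alternative.
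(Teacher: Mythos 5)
Your proposal is correct and is essentially the paper's argument: both are first-order optimality conditions showing that at the optimum the sign of $m_1 f_1 - m_2 f_0$ (with $m_i$ the partial derivatives of $M$ at the optimal $(\pi_{1|1},\pi_{0|0})$) dictates the classifier, which yields the threshold $\delta^* = m_2/m_1$ and hence the fixed-point equation. The only difference is technical convenience: the paper maximizes over the convex set of randomized classifiers $\cF=\{\varphi:\cX\to[0,1]\}$ and invokes the Fr\'echet-derivative variational inequality for the affine-linear maps $\phi\mapsto\pi_{1|1}(\phi),\pi_{0|0}(\phi)$, which disposes of the remainder-term and tie-set bookkeeping you flag under ``where the work is,'' whereas your set-perturbation version handles these by hand.
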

For completeness we provide the proof of Theorem \ref{th:decisiobnounddens} along the lines of \citet{yan2018binary} in Appendix \ref{sec:appperfmetrics}. 
%
    %
    %
    %
    %
%


%
In Appendix \ref{sec:appperfmetrics}, Example \ref{ex-metric-deriv1} we give expressions for the quotient 
\begin{equation}\label{eq:thequotient}
\frac{\partial_{\pi_{0|0}} M\big(\pi_{1|1},\pi_{0|0},\pi \big)}{\partial_{\pi_{1|1}} M\big(\pi_{1|1},\pi_{0|0},\pi \big)},
\end{equation}
i.e.~the right side of \eqref{eq:formofdelta} for Yule's coefficients of association and colligation, for the JAC and the $F_\beta$ score as well as for the MCC and Cohen's $\kappa$. These coincide for the two coefficients of Yule, as well as for JAC and the $F_1$ - score. 

Next we show that even in the most simple classification settings of LDA and QDA, these metrics do not deal appropriately with imbalanced classification problems in that the threshold $\delta^*$ in \eqref{eq:formofdelta} in case of the MCC becomes very large, and for the other metrics even diverges to $\infty$ as $\pi \to 0$, resulting in near-zero values for the sensitivity.    

\begin{example}[LDA] \label{ex:lda}
Suppose that $\rP_i = \mathcal N(\mu_i,\Sigma)$ with $\Sigma$ invertible. For the classifier $\phi_\delta = 1(f_1/f_0 > \delta)$, setting $\Delta^2 = (\mu_1 - \mu_0)^\top\, \Sigma^{-1}\, (\mu_1 - \mu_0) $, we have that 
\begin{equation}\label{eq:LDAdependdelta}
 \pi_{0|0}(\delta) = \Phi\Big( \frac{\log \delta + \Delta^2/2}{\Delta}\Big),\qquad \pi_{1|1}(\delta) = \Phi\Big( \frac{-\log \delta + \Delta^2/2}{\Delta}\Big),
\end{equation} 
where $\Phi$ is the distribution function of $\mathcal N(0,1)$.
For completeness we recall the derivation in Appendix \ref{sec:appperfmetrics}, Example \ref{ex:ldaqdaetchnical}. Plugging this into the expressions for \eqref{eq:thequotient} and equating with $\delta$, see \eqref{eq:formofdelta} gives optimal values $\delta^*$.
Figure \ref{fig:lda1} shows $\delta^*$ as a function of $\pi$ for Yule's Q and Y, for the JAC and the $F_\beta$ score as well as for the MCC for values $\Delta=1$ and $\Delta=2$ of the Mahalanobis distance. 
In all cases, $\delta^*$ becomes very large if $\pi$ tends to zero. For the Jaccard similarity coefficient (and, hence, the $F_1$-score) - see Table \ref{lim-JAC} as well as the $F_{\beta}$-scores and Cohen's $\kappa$ (see Tables \ref{lim-F15} - \ref{lim-Kap} in Appendix \ref{app:ex:lda}), $\delta^*$ tends to infinity as $\pi\to 0$ . For the MCC, $\delta^*$ converges to a finite, albeit very large limit as $\pi\to 0$, see Table \ref{lim-MCC} in Appendix \ref{app:ex:lda}. These large values of $\delta^*$ result in very small values of the sensitivity, while the specificity is near $1$, see Table \ref{tab:LDAspesen-part} for a few selected values and Table \ref{tab:LDAspesen} in Appendix \ref{app:ex:lda}. 
%
%
\medskip
\begin{table}[ht]
\small 
\centering
\begin{tabular}{c|rrrrrrrrrr}
  \hline
    & & \multicolumn{8}{c}{$\pi=\PP(Y=1)$}   \\
$\Delta$ & $10^{-10}$ & $10^{-9}$ & $10^{-8}$ & $10^{-7}$ & $10^{-6}$ & $10^{-5}$ & $10^{-4}$ & $10^{-3}$ & $0.01$ & $0.1$ \\ 
  \hline
  1.00 &269.43 &186.43 &126.15 & 83.11 & 52.99 & 32.42 & 18.79 & 10.11 &  4.85 & 1.86 \\ 
  1.50 & 3.4e3 & 2.0e3 & 1.1e3 &602.13 &311.30 &152.04 & 69.04 & 28.43 & 10.17 & 2.82 \\ 
  2.00 & 3.5e4 & 1.7e4 & 8.0e3 & 3.6e3 & 1.5e3 &595.58 &214.56 & 68.95 & 18.93 & 3.97 \\ 
  3.00 & 2.0e6 & 6.8e5 & 2.3e5 & 7.0e4 & 2.0e4 & 5.3e3 & 1.2e3 &258.47 & 45.80 & 6.33 \\ 
  \hline
\end{tabular}
\caption{Limiting values of the optimal threshold $\delta^*$ of JAC and $F_1$-score for $\pi\to 0$ and different of $\Delta$. \label{lim-JAC}}
\end{table}

\begin{table}[ht]
\small 
\centering
\begin{tabular}{r|rrrrrrrr}  \hline
    & \multicolumn{8}{c}{$\pi=\PP(Y=1)$}   \\
    & $0.0001$ && $0.001$ && $0.01$ && $0.1$ & \\  \hline
JAC & 18.80 &  & 10.10 &  & 4.85 &  & 1.87 &   \\ 
    & 0.01 & 1.00 & 0.03 & 1.00 & 0.14 & 0.98 & 0.45 & 0.87 \\ 
$F_{1.5}$ & 15.30 &  & 7.93 &  & 3.61 &  & 1.26 &  \\ 
    & 0.01 & 1.00 & 0.06 & 0.99 & 0.22 & 0.96 & 0.61 & 0.77 \\ 
$F_{0.5}$ & 26.30 &  & 14.80 &  & 7.66 &  & 3.34 &  \\ 
    & 0.00 & 1.00 & 0.01 & 1.00 & 0.06 & 0.99 & 0.24 & 0.96 \\ 
  \hline
\end{tabular}
\caption{Lines 1,3,5: Optimal threshold $\delta^*$ for JAC, $F_{1.5}$ and $F_{0.5}$, using $\Delta=1$. 
Lines 2,4,6: Corresponding conditional probabilities $\pi_{1|1}$ and $\pi_{0|0}$.}\label{tab:LDAspesen-part}
\end{table}

\begin{figure}   
\centerline{\includegraphics[scale=0.6]{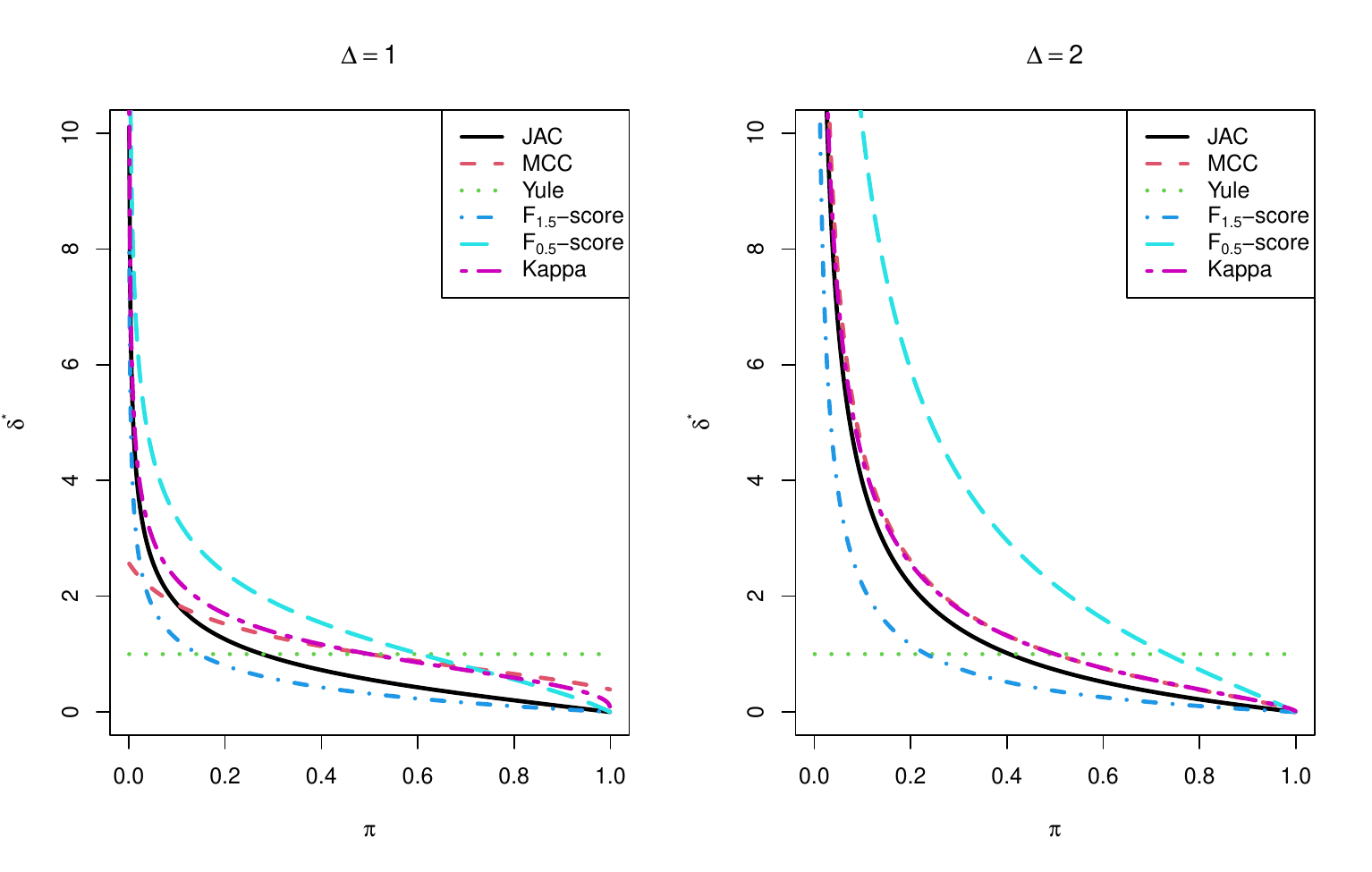}}
\caption{Plots of the optimal threshold $\delta^\ast$ as a function of $\pi$ for the different performance metrics used in Example \ref{ex:lda}}  \label{fig:lda1}
\end{figure}
\end{example}

\begin{example}[QDA]\label{ex:qda}
Let us briefly illustrate for quadratic discriminant analysis that the results of Example \ref{ex:lda} extend to less symmetrical situations.  
Suppose that $\rP_i = \mathcal N(\mu_i,\Sigma_i)$ with $\Sigma_i$ invertible. For the classifier $\phi_\delta(x) = 1(f_1(x)/f_0(x) > \delta)$, we have that $\phi(x) = 1$ if and only if
\begin{align*}
& x^\top \big(\Sigma_1^{-1} - \Sigma_0^{-1} \big) x - 2\, x^\top \big(\Sigma_1^{-1}\mu_1 - \Sigma_0^{-1}\mu_0 \big) + \mu_1^\top \Sigma_1^{-1}\mu_1 - \mu_0^\top \Sigma_0^{-1}\mu_0  \\
< & \, 2\, \log (1/\delta) + \log\Big(\frac{|\Sigma_0|}{|\Sigma_1|} \Big).    
\end{align*}
In Example \ref{ex:ldaqdaetchnical} in Appendix \ref{sec:appperfmetrics}, we derive analytical expressions for sensitivity and specificity in  generalized $\chi^2$-distributions, which allow to determine optimal values of $\delta$ with sufficient numerical precision without resorting to simulations. 

We consider the following two scenarios. 

{\bfseries Scenario 1:}
\begin{align*}
\mu_0 = \begin{pmatrix} 0 \\0 \end{pmatrix}, \quad
\mu_1 = \begin{pmatrix} 2.5 \\ 2.5 \end{pmatrix}, \quad
\Sigma_0 = \begin{pmatrix} 2 & 0.5 \\ 0.5 & 1 \end{pmatrix}, \quad
\Sigma_1 = \begin{pmatrix} 1 & -0.5 \\ -0.5 & 2 \end{pmatrix}
\end{align*}

{\bfseries Scenario 2:}
\begin{align*}
\mu_0 = \begin{pmatrix} 0 \\0 \end{pmatrix}, \quad
\mu_1 = \begin{pmatrix} 1.5 \\ 1.5 \end{pmatrix}, \quad
\Sigma_0 = \begin{pmatrix} 2 & 0.3 \\ 0.3 & 1 \end{pmatrix}, \quad
\Sigma_1 = \begin{pmatrix} 1 & -0.9 \\ -0.9 & 2 \end{pmatrix}
\end{align*}

The results are presented in Figure \ref{fig:qda1}. Except for the Yule coefficients the results are quite similar to LDA. While for LDA the threshold for Yule, which is independent of $\pi$, was always the constant $1$ regardless of $\Delta$, this value is about $6$ and $4$ in the two scenarios considered here, which shows that these measures are quite sensitive to the underlying class conditional distributions. 

\begin{figure}   
\centerline{\includegraphics[scale=0.6]{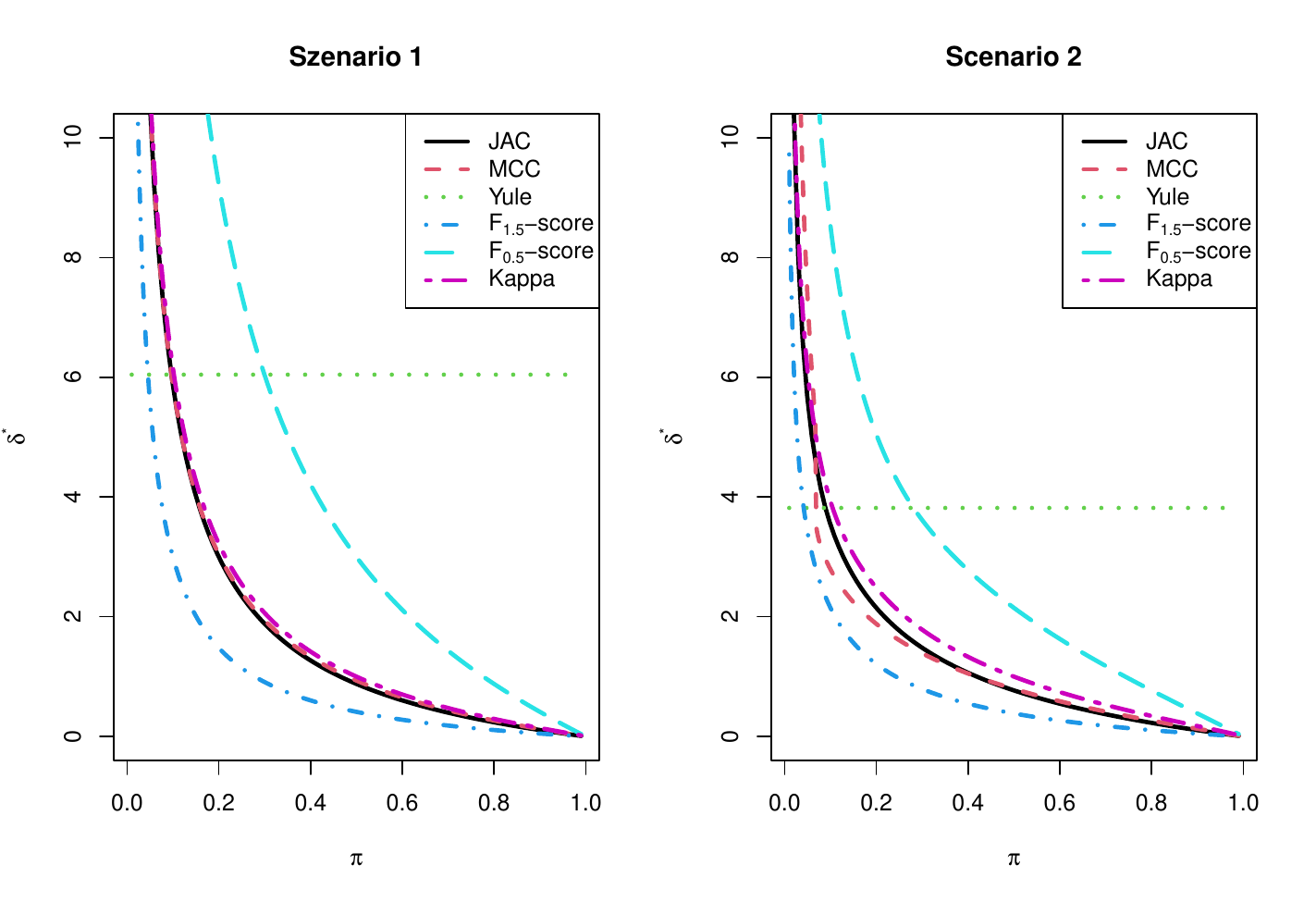}}
\caption{Plots of the optimal threshold $\delta^\ast$ as a function of $\pi$ for the different performance metrics used in Example \ref{ex:qda}.}  \label{fig:qda1}
\end{figure}

\end{example}

\section{Robust performance metrics} \label{robust-metrics}
    
In this section we propose robust variations of the $F_{\beta}$-score (Section \ref{ex:fgen}) as well as the MCC (Section \ref{ex:mccgen}), and also discuss connections to the ROC and precision-recall curves (Section \ref{sec:rocprecisionrecall}). 

For the robust metrics we focus on small values of the probability of the positive class $\pi$, the common situation in class imbalanced settings. Let us call $\widetilde M$  robust to class imbalance if there exists a constant $ \crr > 0$ such that 
%
%
\begin{equation}\label{eq:robust}
 \delta^* (\widetilde M, \pi, \rP_0, \rP_1) \leq  \crr \qquad \forall \ \pi \in (0,1/2].
\end{equation} 
%
%
Note that \eqref{eq:robust} implies that the sensitivity is lower-bounded by $\rP_1(f_1/f_0 > \crr)$
for all $\pi \in (0,1/2]$. Now depending on $\rP_1,\rP_0$ and $c_2$ this lower bound might still be small, and another way to define robustness would be to prescribe lower bounds directly for sensitivity. 
However, the dependence of $\rP_1(f_1/f_0 > \delta^* (\widetilde M, \pi, \rP_0, \rP_1) )$ seems not to be easily accessible in any generality, therefore we focus on the more simple notion \eqref{eq:robust}. 
From \eqref{eq:formofdelta} it is immediate that the condition
\begin{equation}\label{eq:thequotient1}
\frac{\partial_{\pi_{0|0}} M\big(\pi_{1|1},\pi_{0|0},\pi \big)}{\partial_{\pi_{1|1}} M\big(\pi_{1|1},\pi_{0|0},\pi \big)} \leq \crr \qquad \forall \pi \in (0,1/2],\quad \pi_{1|1},\pi_{0|0} \in (0,1)
\end{equation}
 implies \eqref{eq:robust}.

\subsection{Robust \texorpdfstring{$\boldsymbol{F_\beta}$-score}{Fbeta-score}} \label{ex:fgen}

The $F_\beta$-score can be written as the harmonic mean of recall and precision, 
\begin{align*}
M_{\text{F}_{\beta}}(\pi_{1|1}, \pi_{0|0}, \pi)  = \frac{(1+\beta^2)}{ \beta^2\, \frac{\pi}{\pi_{11}} +  \frac{\gamma}{\pi_{11}} }, 
\end{align*}
which as we illustrate above is however not robust in the sense of \eqref{eq:robust}. It turns out that if the true positive probability is included in the weighted harmonic mean, resulting in  
\begin{align*}
M_{\text{F}_{\beta,d_0}}(\pi_{1|1}, \pi_{0|0}, \pi)  
= \frac{(d_0/\pi+\beta^2+1)}{ d_0 \, \frac{1}{\pi_{11}} + \beta^2\, \frac{\pi}{\pi_{11}} +  \frac{\gamma}{\pi_{11}} } 
= \frac{ (d_0/\pi+\beta^2+1)\, \pi \, \pi_{1|1}}{d_0+\beta^2\, \pi + \pi \, \pi_{1|1} + (1-\pi_{0|0})\, (1-\pi)}, 
\end{align*}
for a $d_0>0$, this metric is robust in the sense of \eqref{eq:robust} with $\crr = d_0^{-1}$. Note that the numerator is a normalizing factor which scales the robust F-score to the interval $[0,1]$, with the boundary value $1$ obtained for $\pi = \pi_{1|1} = \gamma$.  
More generally, we define 
\begin{align*}
M_{\text{F}_{\text{rb}}}(\pi_{1|1}, \pi_{0|0}, \pi) 
&= \frac{(d_0/\pi+\beta^2+1)}{1+c} \cdot \frac{1+c\, \frac{\pi}{\pi_{11}}}{ d_0 \, \frac{1}{\pi_{11}} + d_1\, \frac{\pi}{\pi_{11}} +  \frac{\gamma}{\pi_{11}} } \\
&= \frac{(d_0/\pi+\beta^2+1)}{1+c} \cdot \frac{c \, \pi + \pi \, \pi_{1|1}}{d_0+d_1\, \pi + \pi \, \pi_{1|1} + (1-\pi_{0|0})\, (1-\pi)}~, 
\end{align*}
where we assume $c,d_1\geq 0$, $d_0>0$      
and $d_0+d_1-c>0$, in which case the metric satisfies Assumption \ref{ass:boundedderv}. Note that these metrics fall into the class studied in \citet{Koyejo:2014}. 
Then 
$$
\frac{\partial_{\pi_{0|0}} M_{\text{F}_{\text{rb}}}(\pi_{1|1}, \pi_{0|0}, \pi)}{\partial_{\pi_{1|1}} M_{\text{F}_{\text{rb}}}(\pi_{1|1}, \pi_{0|0}, \pi)} = \frac{(c+\pi_{1|1})\, (1-\pi)}{d_0 + (d_1-c)\, \pi + (1-\pi_{0|0})\, (1-\pi)}.
$$
so that \eqref{eq:robust} holds with $\crr = (1+c)/\min(d_0,d_0+d_1-c)$. 
Obviously, for $c=d_0 = 0$ and $d_1 = \beta^2$ we recover the $F_\beta$ score.
%
%

Figure \ref{fig:lda2} shows $\delta^*$ in the setting of Example \ref{ex:lda} for various choices of the parameters. For the black curves, we used $c=0, d_1=1$ and vary $d_0$, which controls the height of $\delta^\ast$ in the limit $\pi\to 0$. For the red curves, we put $c=0, d_0=0.1$ and vary $d_1$. Now, $\lim_{\pi\to 0} \delta^\ast$ is fixed (for given $\Delta$), but the curvature varies. Finally, we put $c=1$ and vary $d_0$ and $d_1$, which results in different shapes of the curves, but qualitatively similar outcomes as before.

Numerical results for $\delta^*$ and corresponding results for sensitivity and specificity can be found in Tables \ref{tab1:ex:fgen}-\ref{tab3:ex:fgen} in Appendix \ref{tab:ex:fgen}. 
The behaviour of $\delta^*$ for $\text{F}_{\text{rb}}$ in the setting of Example \ref{ex:qda} is shown in Figure \ref{fig:qda-frb} in Appendix \ref{app-qda}.

\begin{figure}   
\centerline{\includegraphics[scale=0.6]{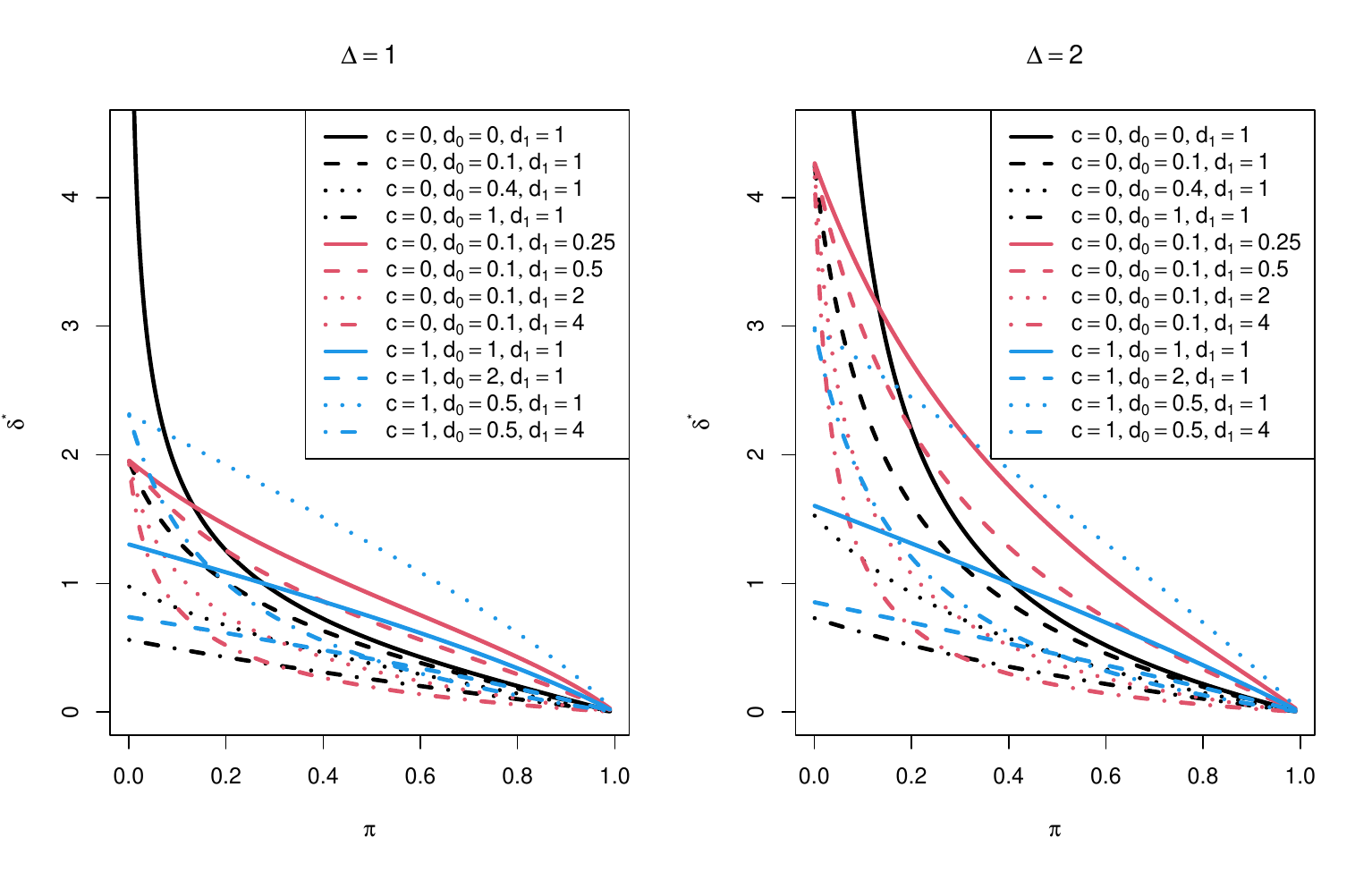}}
\caption{Plots of the optimal threshold $\delta^\ast$ as a function of $\pi$ for $\text{F}_{\text{rb}}$-score proposed in subsection \ref{ex:fgen} with different choices of the parameters.}  \label{fig:lda2}
\end{figure}

\subsection{Robust Matthews correlation coefficient (MCC)} \label{ex:mccgen}
The MCC is defined as the Pearson correlation between the binary random variables $Y$ and $\phi(X)$. We show that a robust version can be obtained if we regularize the variance of $\phi(X)$ so that it is bounded away from $0$. To this end choose a small $d>0$ and define 
\begin{align*}
M_{\text{MCC}_{\text{rb}}}&(\pi_{1|1}, \pi_{0|0}, \pi) 
= \left(\frac{d}{\pi(1-\pi)} + 1\right)^{1/2} \cdot \frac{\pi_{11}\, \pi_{00} - \pi_{01}\, \pi_{10}}{\sqrt{\pi\, (1-\pi) \, \big(d + \gamma \, (1-\gamma) \big)}} \\
 & = \big(d + \pi (1-\pi) \big)^{1/2}   \, \frac{\pi_{1|1}\, \pi_{0|0} - (1-\pi_{1|1})\, (1-\pi_{0|0})}{\Big( d + \big(\pi \pi_{1|1} + (1-\pi) (1- \pi_{0|0})\big) \big(\pi (1-\pi_{1|1}) + (1-\pi) \pi_{0|0}\big) \Big)^{1/2}}.
\end{align*}
The first factor in the definition of $M_{\text{MCC}_{\text{rb}}}$ leaves the resulting Bayes optimal classifier unchanged. We included it in the definition to better scale the robust MCC to the unit interval. 
We have that 
$$
\frac{\partial_{\pi_{0|0}} M_{\text{MCC}_{\text{rb}}}(\pi_{1|1}, \pi_{0|0}, \pi)}{\partial_{\pi_{1|1}} M_{\text{MCC}_{\text{rb}}}(\pi_{1|1}, \pi_{0|0}, \pi)}
= \frac{(2\pi_{1|1}-1)\pi_{0|0} - (\pi_{0|0} + \pi_{1|1} - 1)(2\pi_{1|1} - 1)\pi -  \pi_{1|1} + 1 + 2d}{(2\pi_{0|0}-1)(\pi_{0|0} + \pi_{1|1} - 1)\pi + 2\pi_{0|0}\, (1-\pi_{0|0}) + 2d}~.
$$


If we fix $\pi_{0|0}$ and $\pi_{1|1}$, the numerator is monotone in $\pi$ 
and we obtain 
\begin{align*}
\partial_{\pi_{0|0}} M_{\text{MCC}_{\text{rb}}} &\leq
\max\{ 2\pi_{1|1} (1-\pi_{1|1})+2d, \pi_{1|1}\pi_{0|0} + (1-\pi_{1|1})(1-\pi_{0|0})+2d \} 
\leq 1+2d~.
\end{align*}
Likewise,
\begin{align*}
\partial_{\pi_{1|1}} M_{\text{MCC}_{\text{rb}}} &\geq
\min\{2\pi_{0|0} (1-\pi_{0|0})+2d, \pi_{1|1}\pi_{0|0} + (1-\pi_{1|1})(1-\pi_{0|0})+2d \} \geq 2d~.
\end{align*}
Therefore,  $M_{\text{MCC}_{\text{rb}}}$ satisfies \eqref{eq:robust} with $\crr = (1+2d)/(2d)$.

Figure \ref{fig:lda-mcc} shows $\delta^*$ in the setting of Example \ref{ex:lda} for various choices of the parameter $d$. If $d$ increases, the curvature of the solution function decreases. Hence, the value of $\delta^*$ decreases for $\pi\to 0$ and increases for $\pi\to 1$. For each choice of $d$, the cutoff $\delta^*$ equals 1 in the balanced case $\pi=0.5$.

Numerical results for $\delta^*$ and corresponding results for sensitivity and specificity can be found in Table \ref{tab1:ex:mccgen} in Appendix \ref{tab:ex:mccgen}. 
The behaviour of $\delta^*$ for $\text{MCC}_{\text{rb}}$ in the setting of Example \ref{ex:qda} is shown in Figure \ref{fig:qda-frb} in Appendix \ref{app-qda}.

\begin{figure}   
\centerline{\includegraphics[scale=0.6]{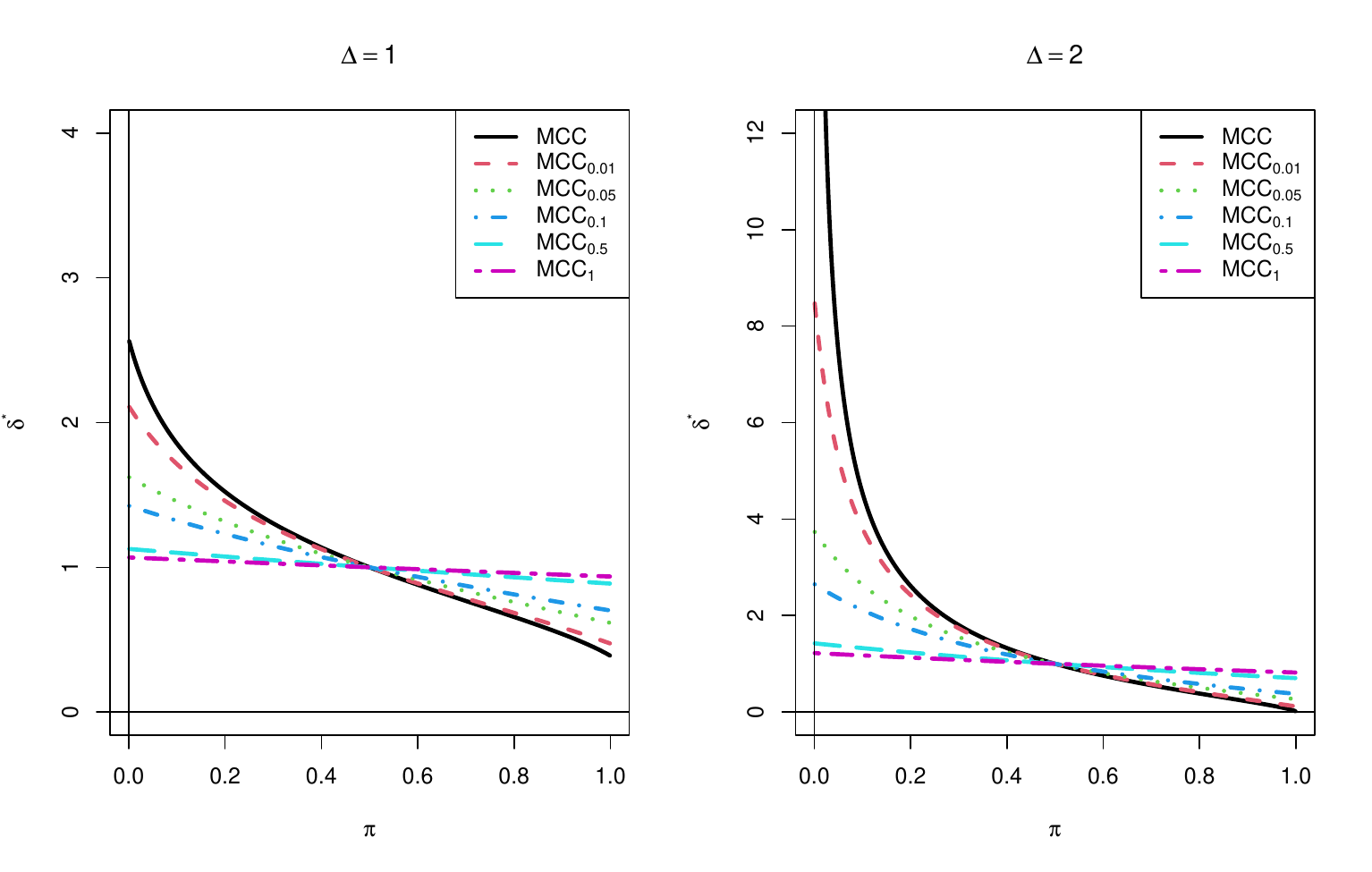}}
\caption{Plots of the optimal threshold $\delta^\ast$ as a function of $\pi$ for $\text{MCC}_{\text{rb}}$ proposed in subsection \ref{ex:mccgen}, using different choices of the parameter $d$.} \label{fig:lda-mcc}
\end{figure}

\subsection{Connections to ROC and precision-recall curves}\label{sec:rocprecisionrecall}

The receiver operating characteristic (ROC) curve is a plot of the true positive rate (TPR) against the false positive rate (FPR). 
In other words, it is the plot of $(1-\pi_{0|0}(\delta), \pi_{1|1}(\delta))$, $\delta \in (0,\infty)$, or equivalently $(1-\pi_{0|0}(\tilde \delta), \pi_{1|1}(\tilde \delta))$ when parameterized by the regression function $\eta$ with threshold $\tilde \delta$, $\tilde \delta \in (0,1)$. See \citet{cali2015some, gneiting2022receiver} for reviews and mathematical properties. The ROC curve is therefore independent of the weight $\pi$ of the positive class. A choice of a performance measure such as the MCC results in an optimal value $\delta^*$ depending on the metric and hence in a certain point $(1-\pi_{0|0}(\delta^*), \pi_{1|1}(\delta^*))$ on the ROC curve. Use of a robust metric ensures that this point will be bounded away from the point $(0,0)$ irrespective of the value of $\pi$.  


%
Fig.~\ref{fig:lda-roc} shows the population ROC curve for LDA with $\Delta = 2$, see \eqref{eq:LDAdependdelta} in Example \ref{ex:lda}, in black, solid line. The MCC optimal points for weights $\pi=0.5,0.2,0.05,0.02$ are shown as circles on the ROC curve. Triangles show the corresponding optimal points for the robust MCC with $d=0.15$. While the MCC optimal points move towards the origin for small values of $\pi$, the optimal values for the robust MCC stay largely in one place.  

\vspace{0.2cm}


Instead of the ROC curve, the precision-recall curve is sometimes used. It plots the  precision $\preci(\delta) = \pi_{1|1}(\delta) \pi /(\pi_{1|1}(\delta) \pi + (1-\pi_{0|0}(\delta))(1-\pi))$, which is the proportion of true positives among positives, against recall (TPR), resulting in the curve $(\pi_{1|1}(\delta), \preci(\delta))$, $\delta \in (0,\infty)$. Evidently, given $\pi$, there is a one-to-one correspondence between $(1- \pi_{0|0}(\delta), \pi_{1|1}(\delta))$ and $(\pi_{1|1}(\delta), \preci(\delta))$, so that one curve can be transformed into the other \citep{davis2006relationship}. 

Precision strongly depends on $\pi$, and for small proportions of positives is often very low except for very large $\delta$ and hence small values of $\pi_{1|1}(\delta)$. Somewhat surprisingly though, the precision-recall curve is still sometimes recommended instead of the ROC curve for imbalanced problems \citep{cook2020consult}. %

For the sake of comparison to ROC, instead of precision against recall we use plots of recall (TPR) against $1-$precision, that is $(1-\preci(\delta), \pi_{1|1}(\delta))$, $\delta \in (0,\infty)$. For the population values in the LDA example above and for each value of the weight $\pi=0.5,0.2,0.05,0.02$, we plot the corresponding curve with different line styles in Figure \ref{fig:lda-roc}, and also include the MCC - and robust MCC - optimal points for the corresponding value of $\pi$. The slopes of the recall against 1-precision curves decrease for decreasing values of the weight $\pi$, and for small values of $\pi$ an appropriately high value of TPR $\pi_{1|1}$ can only be achieved by allowing the precision to become small.


\begin{figure}   
\centerline{\includegraphics[scale=0.6]{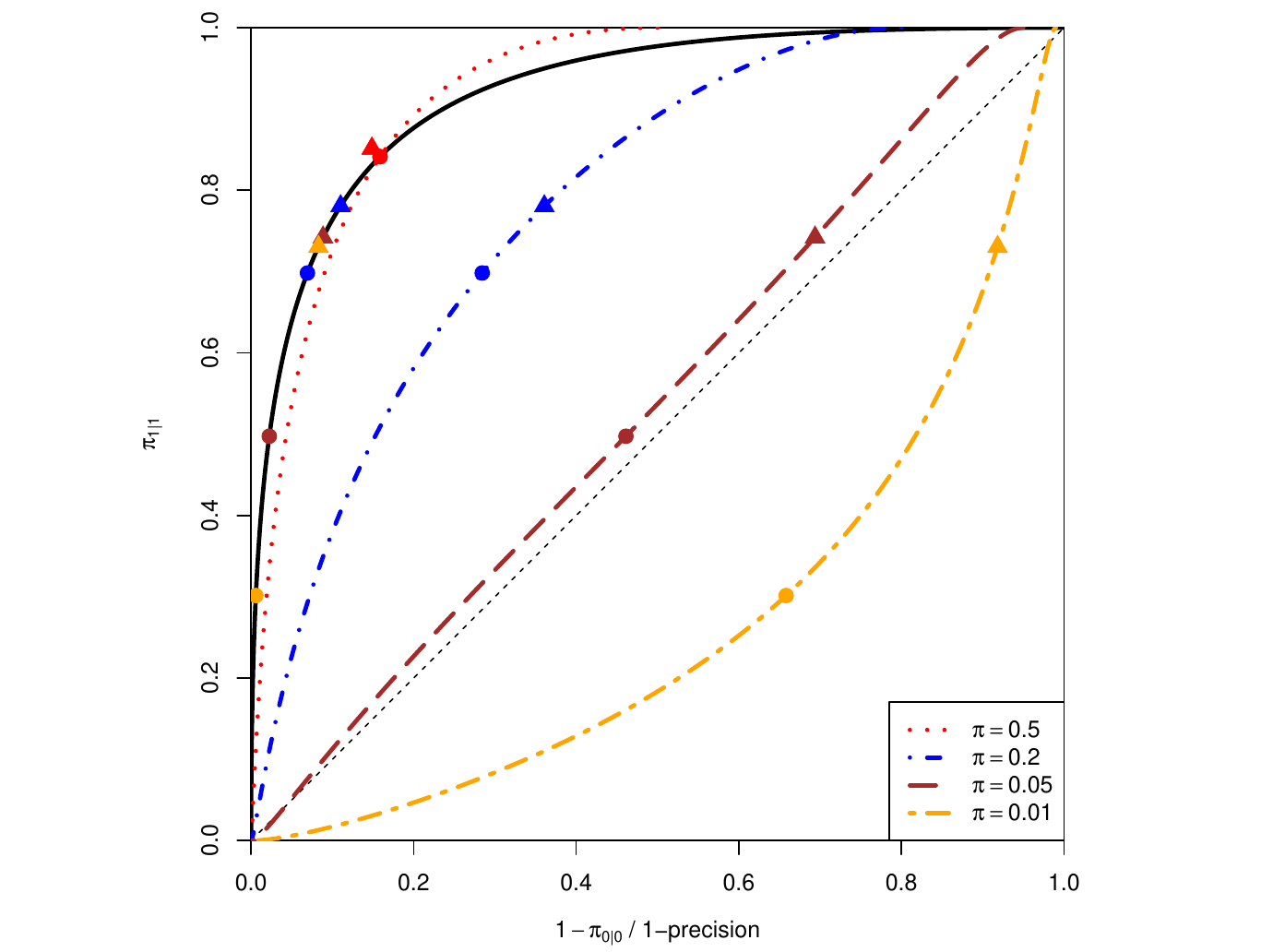}}
\caption{Population ROC curve for Example \ref{ex:lda} in black, solid line. Plot of recall against 1-precision in color with different line styles for varying $\pi$. Circles show the corresponding MCC optimal points; triangles show the points optimal with respect to the robust MCC with $d=0.15$.} \label{fig:lda-roc}
\end{figure}

\section{Data example} \label{data-ex}

In this section, we consider a dataset on credit default risk, available under \url{https://www.kaggle.com/competitions/GiveMeSomeCredit/data},
which contains 11 features and 150000 observations. The binary response \verb+SeriousDlqin2yrs+ indicates if a person experienced 90 days past due delinquency or worse. For this dataset, the percentage of positives is about 6.7\%. The data is analyzed on different occasions, a recent reference with further description is \cite{Dube:2023}. 
Two explanatory variables contain missing numbers; specifically, 19.8\% of the variable \verb+NumberRealEstateLoansOrLines+ and 2.62\% of the variable \verb+NumberOfDependents+ are missing. Since several trials with multiple imputation algorithms did not lead to improvements, we finally omitted both variables. We split the original dataset for training and testing, reserving 80\% for the training set and 20\% for the test set, therein retaining the proportion of positives.
Besides Logistic regression models, we used random forests as implemented in the R package \verb+randomForest+ \citep{rf:2002}. 


To increase the imbalance, we also randomly selected 20\% of the positives which results in a severely imbalanced dataset with a percentage of positives of $\hat\pi=0.014$.
Again, this dataset was splitted, reserving 80\% for the training set, called {\itshape subset.train} in the following, and 20\% for the test set, referenced as {\itshape subset.test}.

The results for the test set and the {\itshape test.subset} data for the logistic models can be found in the upper two panels of Tables \ref{tab-data-ex4}-\ref{tab-data-ex6}, while the lower panels show the outcomes for the random forest classification on the test data. Corresponding results for the full training set and the {\itshape subset.train} data are shown in the Tables \ref{tab-data-ex1}-\ref{tab-data-ex3} in Appendix \ref{appsec:dataexampletrainingdata}.  

In each panel, we give the maximal value of the performance measure, the optimal cutoff $\tilde\delta$, true positive rate $\tilde\pi_{1|1}$ and the true negative rate $\tilde\pi_{0|0}$ of the pertaining classifiers. 
Tables \ref{tab-data-ex4} and \ref{tab-data-ex1} (in Appendix \ref{appsec:dataexampletrainingdata})  show the results for the performance measures used in Example \ref{ex:lda}, i.e. JAC, MCC, Yule, $F_{1.5}$, $F_{0.5}$ and Kappa. Tables \ref{tab-data-ex5}, \ref{tab-data-ex6} and Tables \ref{tab-data-ex2}, \ref{tab-data-ex3} (in Appendix \ref{appsec:dataexampletrainingdata}) give the results for the robust F-score and the robust MCC defined in Section \ref{ex:fgen} and Section \ref{ex:mccgen}, respectively.


Overall the random forest classifier outperforms logistic regression.  For both methods, however, the observed patterns are valid. For all measures, the optimal threshold $\tilde\delta$ is smaller for the more imbalanced sample with $\hat\pi=0.014$ compared to $\hat\pi=0.067$. Nevertheless, the true positive rate $\hat\pi_{1|1}$, which is already quite low for most measures in Table \ref{tab-data-ex4} (and Table \ref{tab-data-ex1} in Appendix \ref{appsec:dataexampletrainingdata}) for $\hat\pi=0.067$, decreases sharply for the more imbalanced samples. Since a typical goal in predicting such data is to minimize or limit default risk, this does not seem desirable. 
The behavior is different for the the robust F-score and the robust MCC measure, where the TPR is generally higher and decreases less or is stable depending on the choice of the parameters.  


\begin{table}
\centering
\begin{tabular}{llrrrrrr}
  \hline
 &  & JAC & MCC & Yule & $F_{1.5}$ & $F_{0.5}$ & Kappa \\ 
  \hline
test & value & 0.187 & 0.267 & 0.916 & 0.325 & 0.348 & 0.265 \\ 
  $\hat{\pi}=0.067$ & $\tilde\delta$ & 0.122 & 0.157 & 0.423 & 0.107 & 0.162 & 0.122 \\ 
   & $\hat\pi_{1|1}$ & 0.320 & 0.239 & 0.059 & 0.370 & 0.227 & 0.320 \\ 
   & $\hat\pi_{0|0}$ & 0.949 & 0.973 & 0.997 & 0.923 & 0.976 & 0.949 \\ 
  \hline
subset.test & value & 0.070 & 0.129 & 0.880 & 0.160 & 0.135 & 0.114 \\ 
  $\hat{\pi}=0.014$ & $\tilde\delta$ & 0.033 & 0.033 & 0.194 & 0.033 & 0.066 & 0.043 \\ 
   & $\hat\pi_{1|1}$ & 0.247 & 0.247 & 0.022 & 0.247 & 0.097 & 0.162 \\ 
   & $\hat\pi_{0|0}$ & 0.964 & 0.964 & 0.999 & 0.964 & 0.992 & 0.981 \\ 
 \hline\hline
test & value & 0.286 & 0.404 & 1.000 & 0.480 & 0.457 & 0.404 \\ 
  $\hat{\pi}=0.067$ & $\tilde\delta$ & 0.256 & 0.256 & 0.004 & 0.152 & 0.373 & 0.256 \\ 
   & $\hat\pi_{1|1}$ & 0.455 & 0.455 & 1.000 & 0.595 & 0.302 & 0.455 \\ 
   & $\hat\pi_{0|0}$ & 0.958 & 0.958 & 0.007 & 0.915 & 0.980 & 0.958 \\ 
   \hline
subset.test & value & 0.100 & 0.179 & 0.972 & 0.213 & 0.167 & 0.168 \\ 
  $\hat{\pi}=0.014$ & $\tilde\delta$ & 0.125 & 0.081 & 0.393 & 0.081 & 0.135 & 0.125 \\ 
   & $\hat\pi_{1|1}$ & 0.227 & 0.334 & 0.007 & 0.334 & 0.200 & 0.227 \\ 
   & $\hat\pi_{0|0}$ & 0.982 & 0.964 & 1.000 & 0.964 & 0.985 & 0.982 \\ 
   \hline
\end{tabular}
\caption{Test data: Maximal value of performance measure, optimal cutoff $\tilde\delta$,  TPR $\hat\pi_{1|1}$ and TNR $\hat\pi_{0|0}$ of classifiers that empirically maximize the various  performance measures used in Example \ref{data-ex}. 
Upper panel: Logistic regression. Lower panel: Random forests. \label{tab-data-ex4}}
\end{table}

\begin{table}
\centering
\begin{tabular}{llrrrrrr}
  \hline
 & $(d_0,d_1)$ & $(0,1)$ & $(0.1,1)$ & $(0.2,1)$ & $(0.5,1)$ & $(0.8,1)$ & $(0.2,2)$ \\ 
  \hline
test & value & 0.315 & 0.328 & 0.353 & 0.438 & 0.504 & 0.375 \\ 
  $\hat{\pi}=0.067$ & $\tilde\delta$ & 0.122 & 0.107 & 0.083 & 0.057 & 0.046 & 0.076 \\ 
   & $\hat\pi_{1|1}$ & 0.320 & 0.370 & 0.502 & 0.704 & 0.793 & 0.554 \\ 
   & $\hat\pi_{0|0}$ & 0.949 & 0.923 & 0.813 & 0.566 & 0.412 & 0.764 \\ 
  \hline
subset.test & value & 0.132 & 0.224 & 0.280 & 0.409 & 0.486 & 0.289 \\ 
  $\hat{\pi}=0.014$ & $\tilde\delta$ & 0.033 & 0.024 & 0.015 & 0.015 & 0.010 & 0.015 \\ 
   & $\hat\pi_{1|1}$ & 0.247 & 0.367 & 0.631 & 0.631 & 0.803 & 0.631 \\ 
   & $\hat\pi_{0|0}$ & 0.964 & 0.908 & 0.704 & 0.704 & 0.450 & 0.704 \\ 
 \hline\hline
test & value & 0.445 & 0.487 & 0.529 & 0.610 & 0.662 & 0.553 \\  
  $\hat{\pi}=0.067$ & $\tilde\delta$ & 0.256 & 0.139 & 0.095 & 0.081 & 0.044 & 0.095 \\ 
   & $\hat\pi_{1|1}$ & 0.455 & 0.616 & 0.711 & 0.746 & 0.846 & 0.711 \\ 
   & $\hat\pi_{0|0}$ & 0.958 & 0.906 & 0.857 & 0.830 & 0.710 & 0.857 \\ 
   \hline
subset.test & value & 0.182 & 0.325 & 0.417 & 0.532 & 0.597 & 0.426 \\ 
  $\hat{\pi}=0.014$ & $\tilde\delta$ & 0.125 & 0.024 & 0.023 & 0.012 & 0.012 & 0.021 \\ 
   & $\hat\pi_{1|1}$ & 0.227 & 0.641 & 0.653 & 0.763 & 0.763 & 0.673 \\ 
   & $\hat\pi_{0|0}$ & 0.982 & 0.868 & 0.864 & 0.763 & 0.763 & 0.852 \\ 
   \hline
\end{tabular}
\caption{Test data: Maximal value of performance measure, optimal cutoff $\tilde\delta$,  TPR $\hat\pi_{1|1}$ and TNR $\hat\pi_{0|0}$ of classifiers that empirically maximize the robust F-score with $c=0$ and different choices of $(d_0,d_1)$ used in Example \ref{data-ex}. 
Upper panel: Logistic regression. Lower panel: Random forests.\label{tab-data-ex5}}
\end{table}

\begin{table}
\centering
\begin{tabular}{llrrrrrr}
  \hline
 & $d$ & $0$ & $0.01$ & $0.05$ & $0.1$ & $0.5$ & $1.0$ \\ 
  \hline
test & value & 0.267 & 0.266 & 0.270 & 0.272 & 0.292 & 0.301 \\ 
  $\hat{\pi}=0.067$ & $\tilde\delta$ & 0.157 & 0.122 & 0.117 & 0.107 & 0.095 & 0.083 \\ 
   & $\hat\pi_{1|1}$ & 0.239 & 0.320 & 0.333 & 0.370 & 0.426 & 0.502 \\ 
   & $\hat\pi_{0|0}$ & 0.973 & 0.949 & 0.944 & 0.923 & 0.880 & 0.813 \\ 
  \hline
subset.test & value & 0.129 & 0.150 & 0.188 & 0.218 & 0.285 & 0.306 \\ 
  $\hat{\pi}=0.014$ & $\tilde\delta$ & 0.033 & 0.033 & 0.024 & 0.022 & 0.015 & 0.015 \\ 
   & $\hat\pi_{1|1}$ & 0.247 & 0.247 & 0.367 & 0.411 & 0.631 & 0.631 \\ 
   & $\hat\pi_{0|0}$ & 0.964 & 0.964 & 0.908 & 0.882 & 0.704 & 0.704 \\ 
 \hline\hline
test & value & 0.404 & 0.406 & 0.435 & 0.461 & 0.530 & 0.550 \\  
  $\hat{\pi}=0.067$ & $\tilde\delta$ & 0.256 & 0.154 & 0.139 & 0.115 & 0.083 & 0.083 \\ 
   & $\hat\pi_{1|1}$ & 0.455 & 0.592 & 0.616 & 0.666 & 0.741 & 0.741 \\ 
   & $\hat\pi_{0|0}$ & 0.958 & 0.916 & 0.906 & 0.882 & 0.834 & 0.834 \\    \hline
subset.test & value & 0.179 & 0.220 & 0.314 & 0.369 & 0.474 & 0.498 \\ 
  $\hat{\pi}=0.014$ & $\tilde\delta$ & 0.081 & 0.037 & 0.023 & 0.023 & 0.021 & 0.021 \\ 
   & $\hat\pi_{1|1}$ & 0.334 & 0.526 & 0.653 & 0.653 & 0.673 & 0.673 \\ 
   & $\hat\pi_{0|0}$ & 0.964 & 0.912 & 0.864 & 0.864 & 0.852 & 0.852 \\ 
   \hline
\end{tabular}
\caption{Test data: Maximal value of performance measure, optimal cutoff $\tilde\delta$,  TPR $\hat\pi_{1|1}$ and TNR $\hat\pi_{0|0}$ of classifiers that empirically maximize the robust MCC with different choices of $d$ used in Example \ref{data-ex}. 
	Upper panel: Logistic regression. Lower panel: Random forests.\label{tab-data-ex6}}
\end{table}

\medskip
Fig. \ref{fig:data-ex-roc} shows the ROC curve of the random forest classifier for the complete test data (in black) and the subset of test data (in grey) as solid lines, which are quite similar as expected. Graphs of recall against  1-precision (modified precision / recall curves)  are shown as dotted lines, which in contrast differ substantially: For the strongly imbalanced subset of test data a precision of less than 20\% has to be allowed to achieve non-zero TPR, while for the full test data the TPR becomes non-zero already at a precision of about 70\%.  

The MCC optimal points on the ROC curve as well as on the 1-precision / recall - curves are shown as circles. Triangles, diamonds and squares show the corresponding optimal points for the robust MCC with $d=0.01,0.05,0.1$, respectively, marking points with higher TPR but also higher values of FPR and lower precision values with increasing parameter $d$.  

\begin{figure}   
\centerline{\includegraphics[scale=0.5]{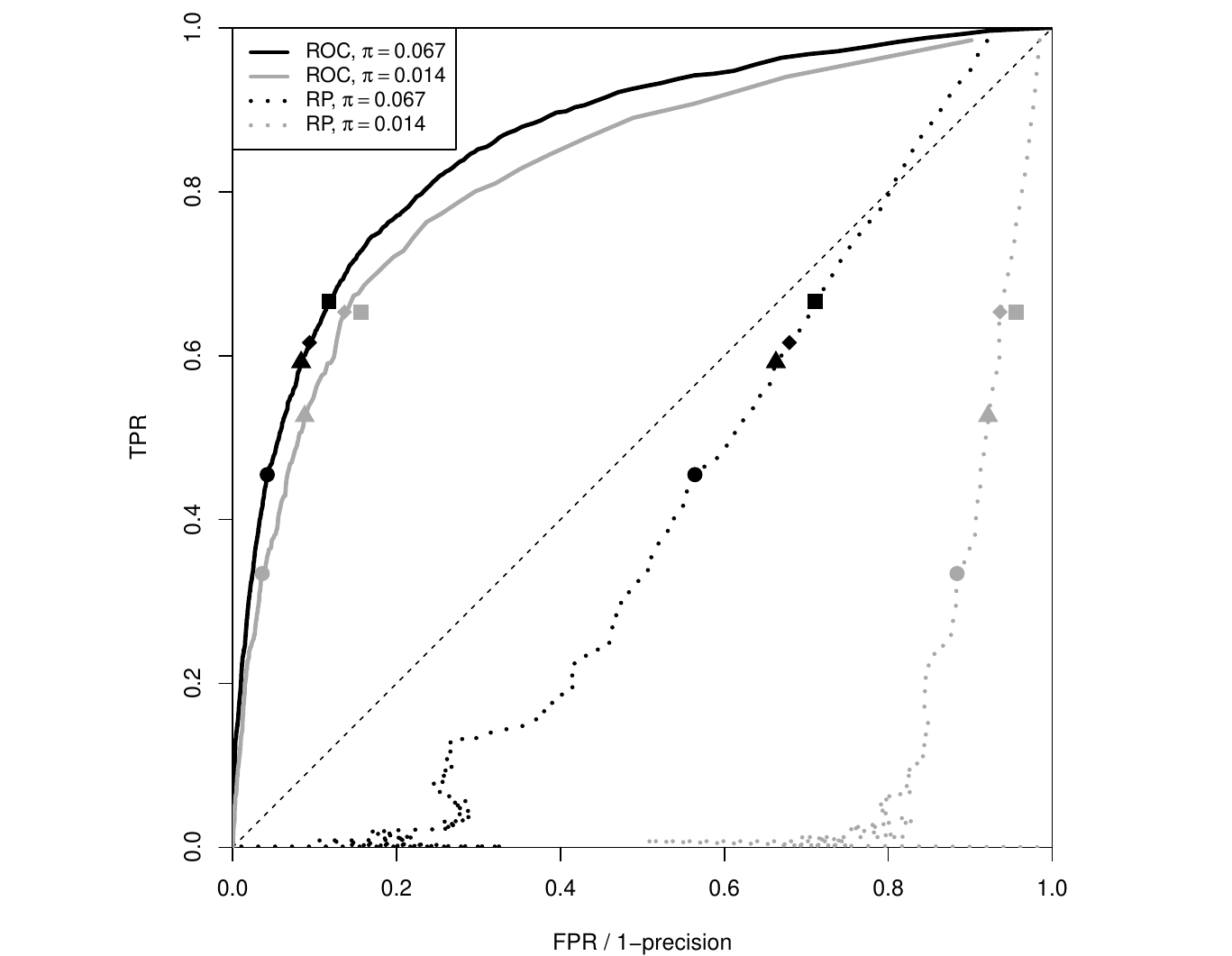}}
\caption{Solid lines: ROC curves of the random forest classifier for the complete test data (black) and the subset of test data (grey) used in Example \ref{data-ex}. Dotted lines: Recall against 1-precision for the complete test data (black) and the subset of test data (grey). Circles show the corresponding MCC optimal points; points optimal with respect to the robust MCC with $d=0.01,0.05,0.1$ are shown as triangles, diamonds and squares.}\label{fig:data-ex-roc}
\end{figure}


\medskip
The left panel of Fig. \ref{fig:data-ex-roc2} shows the ROC curves of logistic regression (black, solid line) and the random forest classifier (grey, solid line) for the complete test data. Graphs of 1-precision against recall are shown as dotted lines (logistic regression in black, random forest classifier in grey). The MCC optimal points on the ROC curve as well as on the recall / 1-precision curves are shown as circles. Triangles, diamonds and squares show the corresponding optimal points for the robust MCC with $d=0.01,0.05,0.1$, respectively. 
The right panel of Fig. \ref{fig:data-ex-roc2} shows the same plot for the subset of the test data. In both cases, the curves for the random forest classifier lie above those of the logistic regression classifier.

\begin{figure}   
\centerline{\includegraphics[scale=0.6]{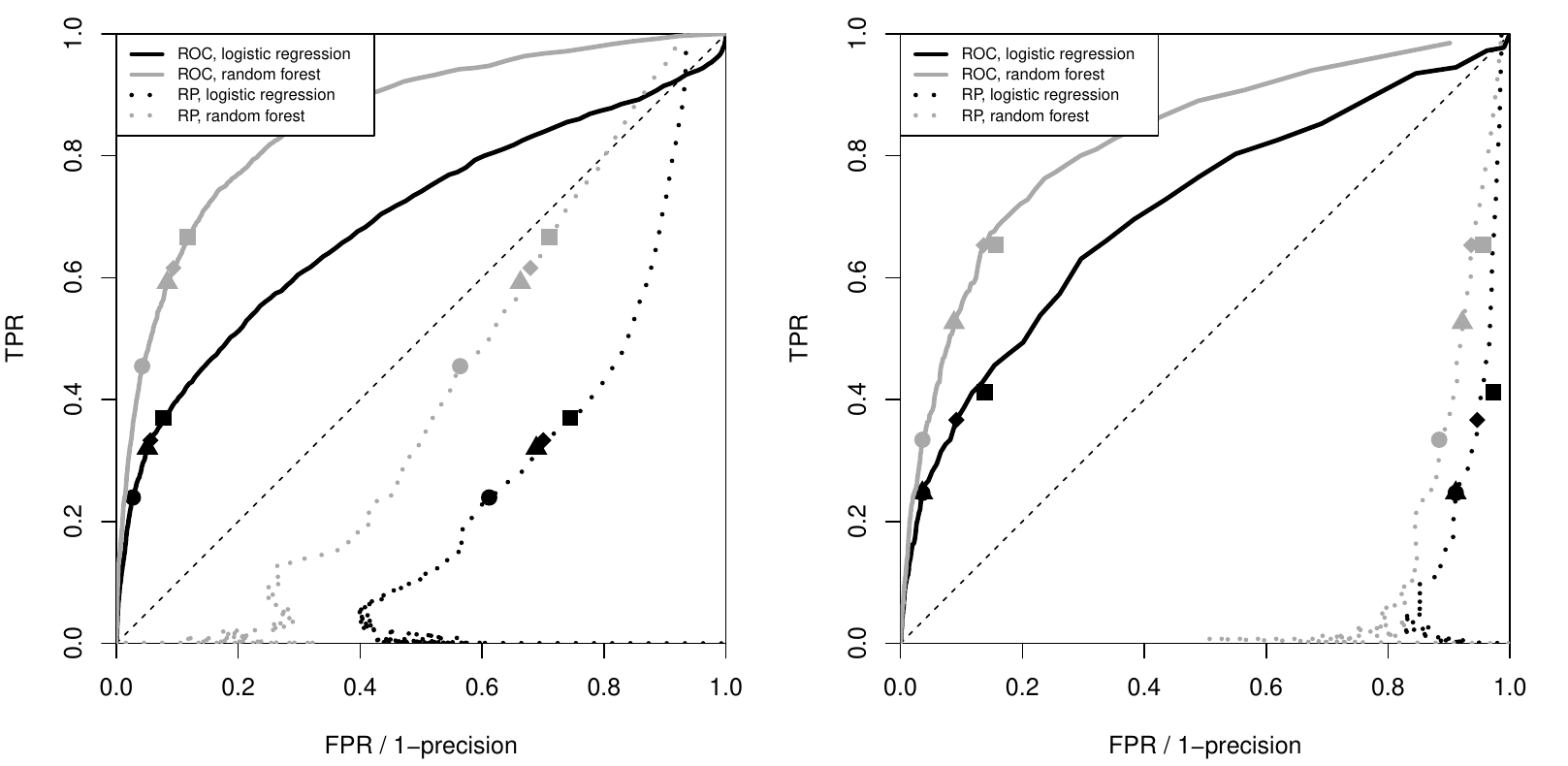}}
\caption{Left panel: Solid lines show ROC curves of logistic regression (black) and random forest classifier (grey) for the complete test data used in Example \ref{data-ex}. Dotted lines show recall against 1-precision curve of logistic regression (black) and random forest classifier (grey) for this data set.
Circles show the corresponding MCC optimal points; points optimal with respect to the robust MCC with $d=0.01,0.05,0.1$ are shown as triangles, diamonds and squares.
Right panel: Same plot for the subset of test data used in Example \ref{data-ex}}\label{fig:data-ex-roc2}
\end{figure}

\section{Conclusions}\label{sec:conclude}

The overall performance of classification methods, regardless of the choice of the threshold, can be assessed using ROC and precision-recall curves. We recommend starting with the  ROC curve, which plots TPR against FPR and thus does not depend on the weight $\pi$ of the positive class, and then adding plots of  recall (TPR) against 1-precision for potentially several prescribed values of the weight $\pi$ which might be of interest in practical settings. 

Sometimes, as in Section \ref{data-ex},  one method dominates overall, in that the ROC curve and hence all recall against 1-precision curves of that method lie above the corresponding curves of the competing method. More often, the curves intersect, making the choice of the method less obvious. While aggregate measures such as the area under the curve (AUC) are frequently used, in order to choose specific classifiers - that is threshold values together with the overall method - performance metrics can be more useful. We recommend using robust metrics, potentially with various settings for the robustness parameters. The resulting classifiers can be visualized as points on the ROC and recall against 1-precision curves to finally choose a classifier. 





%


\clearpage
\appendix
\small

\section{Additional material for Section \ref{sec:perfmetrics}}\label{sec:appperfmetrics}

\begin{proof}[Proof of Theorem \ref{th:decisiobnounddens}]
For classifier $\phi$ and densities $f_i$, $i=0,1$ of the class-conditional distribution w.r.t.~the dominating measure $\mu$, which can be assumed to be finite, we have
\begin{align}\label{eq:mapsagain}
	\begin{split}
	\pi_{1|1}(\phi,f_1) = \pi_{1|1}(\phi) = \PP\big(\phi(X)=1 | Y=1 \big) & =  \int_\cX \phi(x)\, f_1(x)\, \dd \mu(x),\\
	\pi_{0|0}(\phi,f_0) = \pi_{0|0}(\phi) = \PP\big(\phi(X)=0 | Y=0 \big) & =  \int_\cX \big(1-\phi(x)\big)\, f_0(x)\, \dd \mu(x).
	\end{split}
\end{align}

The utiliy of $\phi$ for the metric $M$ with given $\phi \in (0,1)$ is defined by
\begin{equation}\label{eq:utility}
 \UU(\phi;\pi) = \UU(\phi;f_1,f_0,\pi) = M\big(\pi_{1|1}(\phi),\pi_{0|0}(\phi),\pi \big).
 \end{equation}
Let $\cF = \{ \varphi: \cX \to [0,1]\}$, a convex subset of $\LL^2(\cX,\mu)$. If $M$ is differentiable in $\pi_{1|1}$ and $\pi_{0|0}$, then $\UU(\cdot;\pi)$ is Fr\'echet differentiable with derivative $\nabla \UU(\cdot;\pi)$, and a necessary condition for a local extremum $\phi^*$of $\UU$ on $\cF$ is
\begin{equation}\label{eq:extremecond}	
 \int_\cX \nabla \UU(\phi^*;\pi)(x)\, \big(\phi^*(x) - \phi(x) \big)\, \dd \mu(x) \geq 0 \qquad \forall \phi \in \cF.
\end{equation}
Now, the integral operators in \eqref{eq:mapsagain} are linear and affine linear, so that the derivatives are integral operators with kernel $f_1$ and $- f_0$, respectively. Therefore, letting
$$ m_1\big(\pi_{1|1},\pi_{0|0},\pi \big) = \partial_{\pi_{1|1}}\,  M\big(\pi_{1|1},\pi_{0|0},\pi \big)
, \qquad m_2\big(\pi_{1|1},\pi_{0|0},\pi \big) = \partial_{\pi_{0|0}}\,  M\big(\pi_{1|1},\pi_{0|0},\pi \big),$$
we can evaluate $\nabla \UU(\cdot;\pi)$ by the chain rule as the integral operator with kernel
$$ k(x;\phi,\pi,f_0,f_1) = m_1\big(\pi_{1|1}(\phi),\pi_{0|0}(\phi),\pi \big)\, f_1(x)  - m_2\big(\pi_{1|1}(\phi),\pi_{0|0}(\phi),\pi \big)\, f_0(x).$$
Setting
\begin{align*}
	A^+ & = \big\{x \in \cX \mid k(x;\phi^*,\pi,f_0,f_1)>0 \big\} = \Big\{x \in \cX \mid \frac{f_1(x)}{f_0(x)} > \frac{m_2\big(\pi_{1|1}(\phi^*),\pi_{0|0}(\phi^*),\pi \big)}{m_1\big(\pi_{1|1}(\phi^*),\pi_{0|0}(\phi^*),\pi \big)} \Big\},\\
	A^- & = \big\{x \in \cX \mid k(x;\phi^*,\pi,f_0,f_1)<0 \big\} = \Big\{x \in \cX \mid \frac{f_1(x)}{f_0(x)} < \frac{m_2\big(\pi_{1|1}(\phi^*),\pi_{0|0}(\phi^*),\pi \big)}{m_1\big(\pi_{1|1}(\phi^*),\pi_{0|0}(\phi^*),\pi \big)} \Big\},\\
	A^0 & = \big\{x \in \cX \mid k(x;\phi^*,\pi,f_0,f_1)=0 \big\} = \Big\{x \in \cX \mid \frac{f_1(x)}{f_0(x)} = \frac{m_2\big(\pi_{1|1}(\phi^*),\pi_{0|0}(\phi^*),\pi \big)}{m_1\big(\pi_{1|1}(\phi^*),\pi_{0|0}(\phi^*),\pi \big)} \Big\},
\end{align*}	
we can write \eqref{eq:extremecond} as
\begin{align*}
	\int_{A^+} \, k(x;\phi,\pi,f_0,f_1)\, \big(\phi^*(x) - \phi(x) \big)\, \dd \mu(x) + \int_{A^-} \, k(x;\phi,\pi,f_0,f_1)\, \big(\phi^*(x) - \phi(x) \big)\, \dd \mu(x) \geq 0,
\end{align*}
for $ \phi \in \cF$. Hence an optimal classifier has to be $1$ on $A^+$ and $0$ on $A^-$.
\end{proof}

\begin{example} \label{ex-metric-deriv1}
\begin{itemize}
	%
    \item Yule's $Q$ and $Y$:  Yule's coefficient of association \citep{Yule:1912}
    $$
    M_{\text{Q}}(\pi_{1|1}, \pi_{0|0}, \pi) = \frac{\pi_{1|1}\pi_{0|0} - (1-\pi_{1|1})(1-\pi_{0|0})}{\pi_{1|1}\pi_{0|0}+(1-\pi_{1|1})(1-\pi_{0|0})}, 
    $$
    as well as his coefficient of colligation \citep{Yule:1912}
    $$
    M_{\text{Y}}(\pi_{1|1}, \pi_{0|0}, \pi) = \frac{ \sqrt{\pi_{1|1}\pi_{0|0}} - \sqrt{(1-\pi_{1|1})(1-\pi_{0|0})} }{ \sqrt{\pi_{1|1}\pi_{0|0}} + \sqrt{(1-\pi_{1|1})(1-\pi_{0|0})}}, 
    $$
    are independent of $\pi$. For both coefficients, we obtain
    $$
    \frac{\partial_{\pi_{0|0}} M_{\text{Q}}(\pi_{1|1}, \pi_{0|0}, \pi)}{\partial_{\pi_{1|1}} M_{\text{Q}}(\pi_{1|1}, \pi_{0|0}, \pi)} =
    \frac{\partial_{\pi_{0|0}} M_{\text{Y}}(\pi_{1|1}, \pi_{0|0}, \pi)}{\partial_{\pi_{1|1}} M_{\text{Y}}(\pi_{1|1}, \pi_{0|0}, \pi)} =
    \frac{\pi_{1|1} (1-\pi_{1|1})}{\pi_{0|0}(1-\pi_{0|0})}.
    $$
	\item JAC:
	$$
    M_{\text{JAC}}(\pi_{1|1}, \pi_{0|0}, \pi) = \frac{\pi \, \pi_{1|1}}{1-\pi_{0|0}\, (1-\pi)}, 
    $$
	for which
	$$\frac{\partial_{\pi_{0|0}} M_{\text{JAC}}(\pi_{1|1}, \pi_{0|0}, \pi)}{\partial_{\pi_{1|1}} M_{\text{JAC}}(\pi_{1|1}, \pi_{0|0}, \pi)} = \frac{\pi_{1|1}\, (1-\pi)}{1-\pi_{0|0}\, (1-\pi)}.$$
	%
	\item $F_{\beta}$-score: Here, 
	$$M_{\text{F}_{\beta}}(\pi_{1|1}, \pi_{0|0}, \pi) = \frac{(1+\beta^2)\,\pi_{11}}{\beta^2\, (\pi_{11}+\pi_{10}) + \pi_{11} + \pi_{01}} = \frac{(1+\beta^2)\, \pi \, \pi_{1|1}}{\beta^2\, \pi + \pi \, \pi_{1|1} + (1-\pi_{0|0})\, (1-\pi)}~. $$
	Then, we obtain
	$$\frac{\partial_{\pi_{0|0}} M_{\text{F}_{\beta}}(\pi_{1|1}, \pi_{0|0}, \pi)}{\partial_{\pi_{1|1}} M_{\text{F}_{\beta}}(\pi_{1|1}, \pi_{0|0}, \pi)} = \frac{\pi_{1|1}\, (1-\pi)}{\beta^2\, \pi + (1-\pi_{0|0})\, (1-\pi)}.$$
    Note that $\partial_{\pi_{0|0}} M_{\text{F}_1} / \partial_{\pi_{1|1}} M_{\text{F}_1}$ coincides with the corresponding ratio for $M_{\text{JAC}}$.


	\item MCC and $\chi^2$:
	\begin{align*}
	M_{\text{MCC}}&(\pi_{1|1}, \pi_{0|0}, \pi)  = \frac{\pi_{11}\, \pi_{00} - \pi_{01}\, \pi_{10}}{\big(\gamma \, (1-\gamma)\,\pi\, (1-\pi)  \big)^{1/2}}\\
	& = \big(\pi (1-\pi) \big)^{1/2}   \, \frac{\pi_{1|1}\, \pi_{0|0} - (1-\pi_{1|1})\, (1-\pi_{0|0})}{\Big(\big(\pi \pi_{1|1} + (1-\pi) (1- \pi_{0|0})\big) \big(\pi (1-\pi_{1|1}) + (1-\pi) \pi_{0|0}\big) \Big)^{1/2}}~.
	\end{align*}
	Then,
%
$$\frac{\partial_{\pi_{0|0}} M_{\text{MCC}}(\pi_{1|1}, \pi_{0|0}, \pi)}{\partial_{\pi_{1|1}} M_{\text{MCC}}(\pi_{1|1}, \pi_{0|0}, \pi)}
= \frac{(\pi_{0|0} + \pi_{1|1} - 1)(2\pi_{1|1} - 1)\pi + (1-2\pi_{1|1})\pi_{0|0} + \pi_{1|1} - 1}{(1-2\pi_{0|0})(\pi_{0|0} + \pi_{1|1} - 1)\pi + 2\pi_{0|0}^2 - 2\pi_{0|0}}.
$$

	\item  Cohen's $\kappa$:
	\begin{align*}
	M_{\kappa}&(\pi_{1|1}, \pi_{0|0}, \pi)  = \frac{2(\pi_{11}\, \pi_{00} - \pi_{01}\, \pi_{10})}{\pi \, (1-\gamma) + (1-\pi)\, \gamma}\\
	& = \frac{2\pi(1-\pi)\, (\pi_{1|1}+\pi_{0|0}-1)}{2\pi(1-\pi)\, (\pi_{1|1}+\pi_{0|0}-1) - \pi \pi_{1|1} - (1-\pi) \pi_{0|0} + 1}.
	\end{align*}
	Then,
$$\frac{\partial_{\pi_{0|0}} M_{\kappa}(\pi_{1|1}, \pi_{0|0}, \pi)}{\partial_{\pi_{1|1}} M_{\kappa}(\pi_{1|1}, \pi_{0|0}, \pi)}
= \frac{\pi_{1|1}+\pi\, (1-2\pi_{1|1})}{1-\pi_{0|0}-\pi\, (1-2\pi_{0|0})}.
$$

 %
 %
%
    
\end{itemize}
\end{example}

\begin{example}[Sensitivity and specificity for LDA and QDA] \label{ex:ldaqdaetchnical}
First consider LDA as in Example \ref{ex:lda}: $\rP_i = \mathcal N(\mu_i,\Sigma)$ with $\Sigma$ invertible. For the classifier $\phi_\delta = 1(f_1/f_0 > \delta)$, 
we have that 
$$ \phi(x) = 1 \quad \Longleftrightarrow (x-\bar \mu)^\top \Sigma^{-1} (\mu_1 - \mu_0) > \log \delta$$
where $\bar \mu = (\mu_1 + \mu_0)/2$. Set $\Delta^2 = (\mu_1 - \mu_0)^\top\, \Sigma^{-1}\, (\mu_1 - \mu_0) $. Under $\mathcal N(\mu_1,\Sigma)$, the discriminant function is distributed $\mathcal N(\Delta^2/2,\Delta^2)$, and under $\mathcal N(\mu_0,\Sigma)$ it is distributed $\mathcal N(-\Delta^2/2,\Delta^2)$. Therefore, we obtain the formulas \eqref{eq:LDAdependdelta}, 
\begin{equation*}
 \pi_{0|0}(\delta) = \Phi\Big( \frac{\log \delta + \Delta^2/2}{\Delta}\Big),\qquad \pi_{1|1}(\delta) = \Phi\Big( \frac{-\log \delta + \Delta^2/2}{\Delta}\Big),
\end{equation*} 
%

Next let us turn to QDA as in Example \ref{ex:qda}: $\rP_i = \mathcal N(\mu_i,\Sigma_i)$ with $\Sigma_i$ invertible. For the classifier $\phi_\delta(x) = 1(f_1(x)/f_0(x) > \delta)$, we have that $\phi(x) = 1$ if and only if
\begin{align*}
& x^\top \big(\Sigma_1^{-1} - \Sigma_0^{-1} \big) x - 2\, x^\top \big(\Sigma_1^{-1}\mu_1 - \Sigma_0^{-1}\mu_0 \big) + \mu_1^\top \Sigma_1^{-1}\mu_1 - \mu_0^\top \Sigma_0^{-1}\mu_0  \\
< & 2\, \log (1/\delta) + \log\Big(\frac{|\Sigma_0|}{|\Sigma_1|} \Big)    
\end{align*}

Under $\rP_1$, $X$ has the distribution of $\Sigma_1^{1/2} Y + \mu_1$ with $Y$ multivariate standard normal, and $\phi(x) = 1$ if and only if
\begin{align*}
& y^\top \big(I - \Sigma_1^{1/2}\Sigma_0^{-1} \Sigma_1^{1/2}\big) y - 2\, y^\top \Sigma_1^{1/2}  \Sigma_0^{-1}(\mu_1 - \mu_0) \\
< & 2\, \log (1/\delta) + \log\Big(\frac{|\Sigma_0|}{|\Sigma_1|} \Big) + (\mu_1 - \mu_0)^\top \Sigma_0^{-1}(\mu_1 - \mu_0)  
\end{align*}
Take $Q$ orthogonal such that 
$$Q^\top \big(I - \Sigma_1^{1/2}\Sigma_0^{-1} \Sigma_1^{1/2}\big) Q = \text{diag}(\lambda_1, \ldots, \lambda_d).$$
Set 
$$ z = Q^\top y, \qquad a = - \, Q^\top \, \Sigma_1^{1/2}  \Sigma_0^{-1}(\mu_1 - \mu_0),  $$
then the left side equals
$$\sum_{i=1}^d (\lambda_i z_i^2 + 2\, a_i z_i) = \sum_{i : \lambda_i \not=0} \lambda_i (z_i + a_i /\lambda_i)^2 - \sum_{i : \lambda_i \not=0}^d a_i^2/\lambda_i + 2\, \sum_{i : \lambda_i =0}^d a_i z_i,$$
and $(z_i + a_i /\lambda_i)^2$ has a non-central $\chi^2$ - distribution with non-centrality parameter $a_i^2 /\lambda_i^2$. 

\vspace{0.5cm}

Similarly, under $\rP_0$, $X$ has the distribution of $\Sigma_0^{1/2} Y + \mu_0$ with $Y$ multivariate standard normal, and $\phi(x) = 0$ if and only if
\begin{align*}
& y^\top \big(I - \Sigma_0^{1/2}\Sigma_1^{-1} \Sigma_0^{1/2}\big) y - 2\, y^\top \Sigma_0^{1/2}  \Sigma_1^{-1}(\mu_0 - \mu_1) \\
\leq & 2\, \log (\delta) + \log\Big(\frac{|\Sigma_1|}{|\Sigma_0|} \Big) + (\mu_0 - \mu_1)^\top \Sigma_1^{-1}(\mu_0 - \mu_1)  
\end{align*}
Now take $\tilde Q$ orthogonal such that 
$$\tilde Q^\top \big(I - \Sigma_0^{1/2}\Sigma_1^{-1} \Sigma_0^{1/2}\big) \tilde Q = \text{diag}(\tilde \lambda_1, \ldots, \tilde \lambda_d).$$
Set 
$$ z = \tilde Q^\top y, \qquad \tilde a = - \, \tilde Q^\top \, \Sigma_0^{1/2}  \Sigma_1^{-1}(\mu_0 - \mu_1),  $$
then the left side equals
$$\sum_{i=1}^d (\tilde \lambda_i z_i^2 + 2\, \tilde a_i z_i) = \sum_{i : \tilde \lambda_i \not=0} \tilde \lambda_i (z_i + \tilde a_i / \tilde \lambda_i)^2 - \sum_{i : \tilde \lambda_i \not=0}^d \tilde a_i^2/\tilde \lambda_i + 2\, \sum_{i : \tilde \lambda_i =0}^d \tilde a_i z_i,$$
and $(z_i + \tilde a_i /\tilde \lambda_i)^2$ has a non-central $\chi^2$ - distribution with non-centrality parameter $\tilde a_i^2 /\tilde \lambda_i^2$. 
\end{example}

%
%
 %

%
%
%
%


\renewcommand\textfraction{0.01}
\renewcommand{\floatpagefraction}{0.99}
\setcounter{totalnumber}{5}
\setlength{\tabcolsep}{3pt}

\clearpage
\section{Additional tables for Example \ref{ex:lda}} \label{app:ex:lda}

\begin{table}[ht]
\scriptsize 
\centering
\begin{tabular}{rr|rrrrrrrrrrrrrrrr}
  \hline
  & & \multicolumn{16}{c}{$\pi=\PP(Y=1)$}   \\
$\Delta$ & & $0.0001$ && $0.001$ && $0.01$ && $0.1$ && $0.9$ && $0.99$ && $0.999$ && $0.9999$  \\ 
  \hline
   & JAC & 4.41 &  & 3.16 &  & 2.11 &  & 1.19 &  & 0.10 &  & 0.01 &  & 0.00 &  & 0.00 &  \\ 
   &  & 0.00 & 1.00 & 0.02 & 0.99 & 0.11 & 0.96 & 0.46 & 0.73 & 1.00 & 0.00 & 1.00 & 0.00 & 1.00 & 0.00 & 1.00 & 0.00 \\ 
   & Yule & 1.00 &  & 1.00 &  & 1.00 &  & 1.00 &  & 1.00 &  & 1.00 &  & 1.00 &  & 1.00 &  \\ 
   &  & 0.60 & 0.60 & 0.60 & 0.60 & 0.60 & 0.60 & 0.60 & 0.60 & 0.60 & 0.60 & 0.60 & 0.60 & 0.60 & 0.60 & 0.60 & 0.60 \\ 
   & $F_{1.5}$ & 3.94 &  & 2.77 &  & 1.78 &  & 0.92 &  & 0.05 &  & 0.00 &  & 0.00 &  & 0.00 &  \\ 
0.5&  & 0.01 & 1.00 & 0.04 & 0.99 & 0.18 & 0.92 & 0.66 & 0.54 & 1.00 & 0.00 & 1.00 & 0.00 & 1.00 & 0.00 & 1.00 & 0.00 \\ 
   & $F_{0.5}$ & 5.26 &  & 3.88 &  & 2.72 &  & 1.70 &  & 0.31 &  & 0.04 &  & 0.00 &  & 0.00 &  \\ 
   &  & 0.00 & 1.00 & 0.01 & 1.00 & 0.04 & 0.99 & 0.21 & 0.91 & 1.00 & 0.02 & 1.00 & 0.00 & 1.00 & 0.00 & 1.00 & 0.00 \\ 
   & $\kappa$ & 4.55 &  & 3.34 &  & 2.34 &  & 1.53 &  & 0.65 &  & 0.43 &  & 0.30 &  & 0.22 &  \\ 
   &  & 0.00 & 1.00 & 0.02 & 1.00 & 0.07 & 0.97 & 0.27 & 0.86 & 0.86 & 0.27 & 0.97 & 0.07 & 1.00 & 0.02 & 1.00 & 0.00 \\ 
  \hline
   & JAC & 18.80 &  & 10.10 &  & 4.85 &  & 1.87 &  & 0.10 &  & 0.01 &  & 0.00 &  & 0.00 &  \\ 
   &  & 0.01 & 1.00 & 0.03 & 1.00 & 0.14 & 0.98 & 0.45 & 0.87 & 1.00 & 0.04 & 1.00 & 0.00 & 1.00 & 0.00 & 1.00 & 0.00 \\ 
   & Yule & 1.00 &  & 1.00 &  & 1.00 &  & 1.00 &  & 1.00 &  & 1.00 &  & 1.00 &  & 1.00 &  \\ 
   &  & 0.69 & 0.69 & 0.69 & 0.69 & 0.69 & 0.69 & 0.69 & 0.69 & 0.69 & 0.69 & 0.69 & 0.69 & 0.69 & 0.69 & 0.69 & 0.69 \\ 
   & $F_{1.5}$ & 15.30 &  & 7.93 &  & 3.61 &  & 1.26 &  & 0.05 &  & 0.00 &  & 0.00 &  & 0.00 &  \\ 
1  &  & 0.01 & 1.00 & 0.06 & 0.99 & 0.22 & 0.96 & 0.61 & 0.77 & 1.00 & 0.01 & 1.00 & 0.00 & 1.00 & 0.00 & 1.00 & 0.00 \\ 
   & $F_{0.5}$ & 26.30 &  & 14.80 &  & 7.66 &  & 3.34 &  & 0.32 &  & 0.04 &  & 0.00 &  & 0.00 &  \\ 
   &  & 0.00 & 1.00 & 0.01 & 1.00 & 0.06 & 0.99 & 0.24 & 0.96 & 0.95 & 0.26 & 1.00 & 0.00 & 1.00 & 0.00 & 1.00 & 0.00 \\ 
   & $\kappa$ & 19.00 &  & 10.40 &  & 5.21 &  & 2.29 &  & 0.44 &  & 0.19 &  & 0.10 &  & 0.05 &  \\ 
   &  & 0.01 & 1.00 & 0.03 & 1.00 & 0.12 & 0.98 & 0.37 & 0.91 & 0.91 & 0.37 & 0.98 & 0.13 & 1.00 & 0.03 & 1.00 & 0.01 \\ 
  \hline
   & JAC & 215.00 &  & 68.90 &  & 18.90 &  & 3.97 &  & 0.10 &  & 0.01 &  & 0.00 &  & 0.00 &  \\ 
   &  & 0.05 & 1.00 & 0.13 & 1.00 & 0.32 & 0.99 & 0.62 & 0.95 & 0.98 & 0.45 & 1.00 & 0.10 & 1.00 & 0.01 & 1.00 & 0.00 \\ 
   & Yule & 1.00 &  & 1.00 &  & 1.00 &  & 1.00 &  & 1.00 &  & 1.00 &  & 1.00 &  & 1.00 &  \\ 
   &  & 0.84 & 0.84 & 0.84 & 0.84 & 0.84 & 0.84 & 0.84 & 0.84 & 0.84 & 0.84 & 0.84 & 0.84 & 0.84 & 0.84 & 0.84 & 0.84 \\ 
   & $F_{1.5}$ & 146.00 &  & 44.70 &  & 11.50 &  & 2.19 &  & 0.05 &  & 0.00 &  & 0.00 &  & 0.00 &  \\ 
2  &  & 0.07 & 1.00 & 0.18 & 1.00 & 0.41 & 0.99 & 0.73 & 0.92 & 0.99 & 0.30 & 1.00 & 0.04 & 1.00 & 0.00 & 1.00 & 0.00 \\ 
   & $F_{0.5}$ & 401.00 &  & 139.00 &  & 42.00 &  & 10.10 &  & 0.36 &  & 0.04 &  & 0.00 &  & 0.00 &  \\ 
   &  & 0.02 & 1.00 & 0.07 & 1.00 & 0.19 & 1.00 & 0.44 & 0.98 & 0.93 & 0.69 & 1.00 & 0.27 & 1.00 & 0.04 & 1.00 & 0.00 \\ 
   & $\kappa$ & 215.00 &  & 69.40 &  & 19.40 &  & 4.38 &  & 0.23 &  & 0.05 &  & 0.01 &  & 0.00 &  \\ 
   &  & 0.05 & 1.00 & 0.13 & 1.00 & 0.31 & 0.99 & 0.60 & 0.96 & 0.96 & 0.60 & 0.99 & 0.31 & 1.00 & 0.13 & 1.00 & 0.05 \\ 
  \hline
   & JAC & 3820.00 &  & 565.00 &  & 73.70 &  & 7.95 &  & 0.11 &  & 0.01 &  & 0.00 &  & 0.00 &  \\ 
   &  & 0.48 & 1.00 & 0.66 & 1.00 & 0.82 & 1.00 & 0.93 & 0.99 & 0.99 & 0.93 & 1.00 & 0.80 & 1.00 & 0.61 & 1.00 & 0.38 \\ 
   & Yule & 1.00 &  & 1.00 &  & 1.00 &  & 1.00 &  & 1.00 &  & 1.00 &  & 1.00 &  & 1.00 &  \\ 
   &  & 0.98 & 0.98 & 0.98 & 0.98 & 0.98 & 0.98 & 0.98 & 0.98 & 0.98 & 0.98 & 0.98 & 0.98 & 0.98 & 0.98 & 0.98 & 0.98 \\ 
   & $F_{1.5}$ & 1980.00 &  & 280.00 &  & 35.10 &  & 3.67 &  & 0.05 &  & 0.00 &  & 0.00 &  & 0.00 &  \\ 
4  &  & 0.54 & 1.00 & 0.72 & 1.00 & 0.87 & 1.00 & 0.95 & 0.99 & 1.00 & 0.89 & 1.00 & 0.74 & 1.00 & 0.53 & 1.00 & 0.33 \\ 
   & $F_{0.5}$ & 11400.00 &  & 1810.00 &  & 254.00 &  & 29.20 &  & 0.43 &  & 0.04 &  & 0.00 &  & 0.00 &  \\ 
   &  & 0.37 & 1.00 & 0.55 & 1.00 & 0.73 & 1.00 & 0.88 & 1.00 & 0.99 & 0.96 & 1.00 & 0.88 & 1.00 & 0.73 & 1.00 & 0.52 \\ 
   & $\kappa$ & 3820.00 &  & 565.00 &  & 74.00 &  & 8.07 &  & 0.12 &  & 0.01 &  & 0.00 &  & 0.00 &  \\ 
   &  & 0.48 & 1.00 & 0.66 & 1.00 & 0.82 & 1.00 & 0.93 & 0.99 & 0.99 & 0.93 & 1.00 & 0.82 & 1.00 & 0.66 & 1.00 & 0.48 \\ 
  \hline
\end{tabular}
\caption{Lines 1,3,\ldots: Optimal threshold $\delta^*$ for standard metrics and  several values of $\Delta$. Lines 2,4,\ldots: Corresponding conditional probabilities $\pi_{1|1}$ and $\pi_{0|0}$}\label{tab:LDAspesen}
\end{table}

\pagebreak
\begin{table}[ht]
\small 
\centering
\begin{tabular}{c|rrrrrrrrrr}
  \hline
    & & \multicolumn{8}{c}{$\pi=\PP(Y=1)$}   \\
$\Delta$ & $10^{-10}$ & $10^{-9}$ & $10^{-8}$ & $10^{-7}$ & $10^{-6}$ & $10^{-5}$ & $10^{-4}$ & $10^{-3}$ & $0.01$ & $0.1$ \\ 
  \hline
  1.00 & 237.21 & 162.92 & 109.27 & 71.22 & 44.80 & 26.94 & 15.25 & 7.93 & 3.61 & 1.26 \\ 
  1.50 & 2825.82 & 1622.27 & 900.43 & 480.03 & 243.61 & 116.22 & 51.16 & 20.19 & 6.79 & 1.70 \\ 
  2.00 & 27356.75 & 13175.23 & 6079.02 & 2665.88 & 1099.51 & 420.39 & 146.00 & 44.70 & 11.49 & 2.19 \\ 
  3.00 & 1354744.38 & 464321.86 & 150125.64 & 45357.40 & 12653.67 & 3210.71 & 727.02 & 143.52 & 23.96 & 3.10 \\ 
  \hline
\end{tabular}
\caption{Limiting values of the optimal threshold $\delta^\ast$ of $F_{1.5}$ for $\pi\to 0$ and several values of $\Delta$. \label{lim-F15}}
\end{table}
\bigskip
\setlength{\tabcolsep}{2pt}
\begin{table}[ht]
\small 
\centering
\begin{tabular}{c|rrrrrrrrrr}
  \hline
    & & \multicolumn{8}{c}{$\pi=\PP(Y=1)$}   \\
$\Delta$ & $10^{-10}$ & $10^{-9}$ & $10^{-8}$ & $10^{-7}$ & $10^{-6}$ & $10^{-5}$ & $10^{-4}$ & $10^{-3}$ & $0.01$ & $0.1$ \\ 
  \hline
  1.00 & 333.17 & 233.27 & 160.06 & 107.22 & 69.78 & 43.82 & 26.29 & 14.83 & 7.66 & 3.35 \\ 
  1.50 & 4671.72 & 2756.78 & 1580.39 & 875.71 & 465.90 & 235.84 & 112.15 & 49.15 & 19.21 & 6.12 \\ 
  2.00 & 53081.39 & 26479.61 & 12729.65 & 5860.90 & 2563.68 & 1054.07 & 401.43 & 138.66 & 41.98 & 10.11 \\ 
  3.00 & 3589564.48 & 1291462.21 & 441531.99 & 142346.77 & 42862.63 & 11910.11 & 3007.53 & 676.76 & 131.88 & 20.34 \\ 
  \hline
\end{tabular}
\caption{Limiting values of the optimal threshold $\delta^\ast$ of $F_{0.5}$ for $\pi\to 0$ and several values of $\Delta$. \label{lim-F05}}
\end{table}
\bigskip
\setlength{\tabcolsep}{3pt}
\begin{table}[ht]
\small 
\centering
\begin{tabular}{c|rrrrrrrrrr}
  \hline
    & & \multicolumn{8}{c}{$\pi=\PP(Y=1)$}   \\
$\Delta$ & $10^{-10}$ & $10^{-9}$ & $10^{-8}$ & $10^{-7}$ & $10^{-6}$ & $10^{-5}$ & $10^{-4}$ & $10^{-3}$ & $0.01$ & $0.1$ \\ 
  \hline
  1.00 & 269.58 & 186.60 & 126.33 & 83.30 & 53.19 & 32.65 & 19.05 & 10.40 & 5.21 & 2.29 \\ 
  1.50 & 3411.85 & 1979.74 & 1112.83 & 602.41 & 311.60 & 152.37 & 69.40 & 28.84 & 10.62 & 3.26 \\ 
  2.00 & 35067.15 & 17119.42 & 8025.62 & 3587.06 & 1514.13 & 596.00 & 215.02 & 69.43 & 19.43 & 4.38 \\ 
  3.00 & 1950933.76 & 681242.81 & 225050.90 & 69722.95 & 20036.29 & 5267.30 & 1244.92 & 259.00 & 46.22 & 6.59 \\
  \hline
\end{tabular}
\caption{Limiting values of the optimal threshold $\delta^\ast$ of Kappa for $\pi\to 0$ and several values of $\Delta$. \label{lim-Kap}}
\end{table}
\bigskip
\begin{table}[ht]
\small 
\centering
\begin{tabular}{c|rrrrrrrrrr}
  \hline
    & & \multicolumn{8}{c}{$\pi=\PP(Y=1)$}   \\
$\Delta$ & $10^{-10}$ & $10^{-9}$ & $10^{-8}$ & $10^{-7}$ & $10^{-6}$ & $10^{-5}$ & $10^{-4}$ & $10^{-3}$ & $0.01$ & $0.1$ \\ 
  \hline
  1 & 2.58 & 2.58 & 2.58 & 2.58 & 2.58 & 2.58 & 2.57 & 2.56 & 2.46 & 1.85 \\ 
1.5 & 10.51 & 10.51 & 10.51 & 10.51 & 10.51 & 10.51 & 10.46 & 10.01 & 7.46 & 3.08 \\ 
  2 & 110.26 & 110.26 & 110.26 & 110.25 & 110.15 & 109.13 & 100.38 & 62.89 & 20.92 & 4.55 \\ 
  3 & 178272.20 & 177701.27 & 172284.45 & 136127.76 & 58364.38 & 14096.18 & 2541.43 & 394.69 & 55.79 & 6.93 \\ 
   \hline
\end{tabular}
\caption{Limiting values of the optimal threshold $\delta^\ast$ of MCC for $\pi\to 0$ and several values of $\Delta$. \label{lim-MCC}}
\end{table}

\clearpage
\section{Additional tables for Subsection \ref{ex:fgen}} \label{tab:ex:fgen}


\begin{table}[ht]
\scriptsize 
\centering
\begin{tabular}{rr|rrrrrrrrrrrrrrrr}
  \hline
    & & \multicolumn{16}{c}{$\pi=\PP(Y=1)$}   \\
$\Delta$ & $d_0$ & $0.0001$ && $0.001$ && $0.01$ && $0.1$ && $0.9$ && $0.99$ && $0.999$ && $0.9999$  \\ 
\hline
   & 0.00 & 4.41 &  & 3.16 &  & 2.11 &  & 1.19 &  & 0.10 &  & 0.01 &  & 0.00 &  & 0.00 &  \\ 
   &  & 0.00 & 1.00 & 0.02 & 0.99 & 0.11 & 0.96 & 0.46 & 0.73 & 1.00 & 0.00 & 1.00 & 0.00 & 1.00 & 0.00 & 1.00 & 0.00 \\ 
   & 0.10 & 1.23 &  & 1.23 &  & 1.19 &  & 0.96 &  & 0.09 &  & 0.01 &  & 0.00 &  & 0.00 &  \\ 
0.5&  & 0.43 & 0.75 & 0.43 & 0.75 & 0.46 & 0.73 & 0.63 & 0.57 & 1.00 & 0.00 & 1.00 & 0.00 & 1.00 & 0.00 & 1.00 & 0.00 \\ 
   & 0.40 & 0.77 &  & 0.77 &  & 0.76 &  & 0.68 &  & 0.07 &  & 0.01 &  & 0.00 &  & 0.00 &  \\ 
   &  & 0.78 & 0.40 & 0.78 & 0.40 & 0.78 & 0.39 & 0.85 & 0.30 & 1.00 & 0.00 & 1.00 & 0.00 & 1.00 & 0.00 & 1.00 & 0.00 \\ 
   & 1.00 & 0.51 &  & 0.51 &  & 0.50 &  & 0.45 &  & 0.05 &  & 0.00 &  & 0.00 &  & 0.00 &  \\ 
   &  & 0.95 & 0.13 & 0.95 & 0.13 & 0.95 & 0.13 & 0.97 & 0.09 & 1.00 & 0.00 & 1.00 & 0.00 & 1.00 & 0.00 & 1.00 & 0.00 \\ 
   \hline
   & 0.00 & 18.80 &  & 10.10 &  & 4.85 &  & 1.87 &  & 0.10 &  & 0.01 &  & 0.00 &  & 0.00 &  \\ 
   &  & 0.01 & 1.00 & 0.03 & 1.00 & 0.14 & 0.98 & 0.45 & 0.87 & 1.00 & 0.04 & 1.00 & 0.00 & 1.00 & 0.00 & 1.00 & 0.00 \\ 
   & 0.10 & 1.96 &  & 1.95 &  & 1.87 &  & 1.33 &  & 0.09 &  & 0.01 &  & 0.00 &  & 0.00 &  \\ 
1  &  & 0.43 & 0.88 & 0.43 & 0.88 & 0.45 & 0.87 & 0.59 & 0.78 & 1.00 & 0.03 & 1.00 & 0.00 & 1.00 & 0.00 & 1.00 & 0.00 \\ 
   & 0.40 & 0.98 &  & 0.97 &  & 0.96 &  & 0.81 &  & 0.07 &  & 0.01 &  & 0.00 &  & 0.00 &  \\ 
   &  & 0.70 & 0.68 & 0.70 & 0.68 & 0.71 & 0.68 & 0.76 & 0.61 & 1.00 & 0.02 & 1.00 & 0.00 & 1.00 & 0.00 & 1.00 & 0.00 \\ 
   & 1.00 & 0.56 &  & 0.56 &  & 0.55 &  & 0.49 &  & 0.05 &  & 0.00 &  & 0.00 &  & 0.00 &  \\ 
   &  & 0.86 & 0.47 & 0.86 & 0.47 & 0.86 & 0.46 & 0.89 & 0.42 & 1.00 & 0.01 & 1.00 & 0.00 & 1.00 & 0.00 & 1.00 & 0.00 \\ 
   \hline
   & 0.00 & 215.00 &  & 68.90 &  & 18.90 &  & 3.97 &  & 0.10 &  & 0.01 &  & 0.00 &  & 0.00 &  \\ 
   &  & 0.05 & 1.00 & 0.13 & 1.00 & 0.32 & 0.99 & 0.62 & 0.95 & 0.98 & 0.45 & 1.00 & 0.10 & 1.00 & 0.01 & 1.00 & 0.00 \\ 
   & 0.10 & 4.27 &  & 4.24 &  & 3.97 &  & 2.40 &  & 0.09 &  & 0.01 &  & 0.00 &  & 0.00 &  \\ 
2  &  & 0.61 & 0.96 & 0.61 & 0.96 & 0.62 & 0.95 & 0.71 & 0.92 & 0.99 & 0.43 & 1.00 & 0.09 & 1.00 & 0.01 & 1.00 & 0.00 \\ 
   & 0.40 & 1.53 &  & 1.53 &  & 1.49 &  & 1.18 &  & 0.07 &  & 0.01 &  & 0.00 &  & 0.00 &  \\ 
   &  & 0.78 & 0.89 & 0.78 & 0.89 & 0.79 & 0.88 & 0.82 & 0.86 & 0.99 & 0.38 & 1.00 & 0.07 & 1.00 & 0.00 & 1.00 & 0.00 \\ 
   & 1.00 & 0.73 &  & 0.73 &  & 0.72 &  & 0.62 &  & 0.05 &  & 0.00 &  & 0.00 &  & 0.00 &  \\ 
   &  & 0.88 & 0.80 & 0.88 & 0.80 & 0.88 & 0.80 & 0.89 & 0.78 & 0.99 & 0.31 & 1.00 & 0.05 & 1.00 & 0.00 & 1.00 & 0.00 \\ 
   \hline
   & 0.00 & 1240.00 &  & 258.00 &  & 45.80 &  & 6.33 &  & 0.11 &  & 0.01 &  & 0.00 &  & 0.00 &  \\ 
   &  & 0.19 & 1.00 & 0.36 & 1.00 & 0.59 & 1.00 & 0.81 & 0.98 & 0.99 & 0.77 & 1.00 & 0.49 & 1.00 & 0.21 & 1.00 & 0.06 \\ 
   & 0.10 & 6.92 &  & 6.86 &  & 6.33 &  & 3.44 &  & 0.10 &  & 0.01 &  & 0.00 &  & 0.00 &  \\ 
3  &  & 0.80 & 0.98 & 0.80 & 0.98 & 0.81 & 0.98 & 0.86 & 0.97 & 0.99 & 0.76 & 1.00 & 0.47 & 1.00 & 0.20 & 1.00 & 0.05 \\ 
   & 0.40 & 2.03 &  & 2.03 &  & 1.97 &  & 1.51 &  & 0.07 &  & 0.01 &  & 0.00 &  & 0.00 &  \\ 
   &  & 0.90 & 0.96 & 0.90 & 0.96 & 0.90 & 0.96 & 0.91 & 0.95 & 0.99 & 0.74 & 1.00 & 0.44 & 1.00 & 0.18 & 1.00 & 0.05 \\ 
   & 1.00 & 0.88 &  & 0.87 &  & 0.86 &  & 0.72 &  & 0.05 &  & 0.00 &  & 0.00 &  & 0.00 &  \\ 
   &  & 0.94 & 0.93 & 0.94 & 0.93 & 0.94 & 0.93 & 0.95 & 0.92 & 0.99 & 0.70 & 1.00 & 0.40 & 1.00 & 0.15 & 1.00 & 0.03 \\ 
   \hline
\end{tabular}
\caption{Lines 1,3,\ldots: Optimal threshold $\delta^*$ for $F_{\text{rb}}$ with varying values of $d_0$ and $\Delta$; $c=0$ and $d_1=1$ fixed. Lines 2,4,\ldots: Corresponding conditional probabilities $\pi_{1|1}$ and $\pi_{0|0}$}. \label{tab1:ex:fgen}
\end{table}
\begin{table}
\scriptsize 
\centering
\begin{tabular}{rr|rrrrrrrrrrrrrrrr}
  \hline
    & & \multicolumn{16}{c}{$\pi=\PP(Y=1)$}   \\
$\Delta$ & $d_1$ & $0.0001$ && $0.001$ && $0.01$ && $0.1$ && $0.9$ && $0.99$ && $0.999$ && $0.9999$  \\ 
\hline
   & 0.25 & 1.23 &  & 1.23 &  & 1.22 &  & 1.12 &  & 0.23 &  & 0.03 &  & 0.00 &  & 0.00 &  \\ 
   &  & 0.43 & 0.75 & 0.43 & 0.75 & 0.44 & 0.74 & 0.51 & 0.68 & 1.00 & 0.00 & 1.00 & 0.00 & 1.00 & 0.00 & 1.00 & 0.00 \\ 
   & 0.50 & 1.23 &  & 1.23 &  & 1.21 &  & 1.06 &  & 0.15 &  & 0.02 &  & 0.00 &  & 0.00 &  \\ 
0.5&  & 0.43 & 0.75 & 0.43 & 0.75 & 0.45 & 0.74 & 0.55 & 0.64 & 1.00 & 0.00 & 1.00 & 0.00 & 1.00 & 0.00 & 1.00 & 0.00 \\ 
   & 2.00 & 1.23 &  & 1.22 &  & 1.16 &  & 0.83 &  & 0.05 &  & 0.00 &  & 0.00 &  & 0.00 &  \\ 
   &  & 0.43 & 0.75 & 0.44 & 0.74 & 0.48 & 0.71 & 0.73 & 0.45 & 1.00 & 0.00 & 1.00 & 0.00 & 1.00 & 0.00 & 1.00 & 0.00 \\ 
   & 4.00 & 1.23 &  & 1.22 &  & 1.11 &  & 0.68 &  & 0.03 &  & 0.00 &  & 0.00 &  & 0.00 &  \\ 
   &  & 0.43 & 0.75 & 0.44 & 0.74 & 0.52 & 0.68 & 0.85 & 0.30 & 1.00 & 0.00 & 1.00 & 0.00 & 1.00 & 0.00 & 1.00 & 0.00 \\ 
   \hline
   & 0.25 & 1.96 &  & 1.95 &  & 1.93 &  & 1.68 &  & 0.24 &  & 0.03 &  & 0.00 &  & 0.00 &  \\ 
   &  & 0.43 & 0.88 & 0.43 & 0.88 & 0.44 & 0.88 & 0.49 & 0.85 & 0.97 & 0.18 & 1.00 & 0.00 & 1.00 & 0.00 & 1.00 & 0.00 \\ 
   & 0.50 & 1.96 &  & 1.95 &  & 1.91 &  & 1.54 &  & 0.15 &  & 0.02 &  & 0.00 &  & 0.00 &  \\ 
1  &  & 0.43 & 0.88 & 0.43 & 0.88 & 0.44 & 0.87 & 0.53 & 0.82 & 0.99 & 0.09 & 1.00 & 0.00 & 1.00 & 0.00 & 1.00 & 0.00 \\ 
   & 2.00 & 1.96 &  & 1.94 &  & 1.79 &  & 1.08 &  & 0.05 &  & 0.00 &  & 0.00 &  & 0.00 &  \\ 
   &  & 0.43 & 0.88 & 0.44 & 0.88 & 0.47 & 0.86 & 0.66 & 0.72 & 1.00 & 0.01 & 1.00 & 0.00 & 1.00 & 0.00 & 1.00 & 0.00 \\ 
   & 4.00 & 1.95 &  & 1.92 &  & 1.67 &  & 0.81 &  & 0.03 &  & 0.00 &  & 0.00 &  & 0.00 &  \\ 
   &  & 0.43 & 0.88 & 0.44 & 0.88 & 0.49 & 0.84 & 0.76 & 0.61 & 1.00 & 0.00 & 1.00 & 0.00 & 1.00 & 0.00 & 1.00 & 0.00 \\ 
   \hline
   & 0.25 & 4.28 &  & 4.27 &  & 4.17 &  & 3.38 &  & 0.26 &  & 0.03 &  & 0.00 &  & 0.00 &  \\ 
   &  & 0.61 & 0.96 & 0.61 & 0.96 & 0.61 & 0.96 & 0.65 & 0.95 & 0.95 & 0.63 & 1.00 & 0.22 & 1.00 & 0.03 & 1.00 & 0.00 \\ 
   & 0.50 & 4.27 &  & 4.26 &  & 4.10 &  & 2.96 &  & 0.16 &  & 0.02 &  & 0.00 &  & 0.00 &  \\ 
2  &  & 0.61 & 0.96 & 0.61 & 0.96 & 0.62 & 0.96 & 0.68 & 0.94 & 0.97 & 0.54 & 1.00 & 0.15 & 1.00 & 0.01 & 1.00 & 0.00 \\ 
   & 2.00 & 4.27 &  & 4.21 &  & 3.73 &  & 1.76 &  & 0.05 &  & 0.00 &  & 0.00 &  & 0.00 &  \\ 
   &  & 0.61 & 0.96 & 0.61 & 0.96 & 0.63 & 0.95 & 0.76 & 0.90 & 0.99 & 0.31 & 1.00 & 0.05 & 1.00 & 0.00 & 1.00 & 0.00 \\ 
   & 4.00 & 4.26 &  & 4.16 &  & 3.34 &  & 1.18 &  & 0.03 &  & 0.00 &  & 0.00 &  & 0.00 &  \\ 
   &  & 0.61 & 0.96 & 0.61 & 0.96 & 0.65 & 0.95 & 0.82 & 0.86 & 1.00 & 0.21 & 1.00 & 0.02 & 1.00 & 0.00 & 1.00 & 0.00 \\ 
   \hline
   & 0.25 & 6.93 &  & 6.91 &  & 6.73 &  & 5.21 &  & 0.29 &  & 0.03 &  & 0.00 &  & 0.00 &  \\ 
   &  & 0.80 & 0.98 & 0.80 & 0.98 & 0.81 & 0.98 & 0.83 & 0.98 & 0.97 & 0.86 & 1.00 & 0.62 & 1.00 & 0.33 & 1.00 & 0.11 \\ 
   & 0.50 & 6.93 &  & 6.89 &  & 6.59 &  & 4.44 &  & 0.17 &  & 0.02 &  & 0.00 &  & 0.00 &  \\ 
3  &  & 0.80 & 0.98 & 0.80 & 0.98 & 0.81 & 0.98 & 0.84 & 0.98 & 0.98 & 0.82 & 1.00 & 0.55 & 1.00 & 0.26 & 1.00 & 0.08 \\ 
   & 2.00 & 6.92 &  & 6.81 &  & 5.87 &  & 2.40 &  & 0.05 &  & 0.00 &  & 0.00 &  & 0.00 &  \\ 
   &  & 0.80 & 0.98 & 0.81 & 0.98 & 0.82 & 0.98 & 0.89 & 0.96 & 0.99 & 0.70 & 1.00 & 0.39 & 1.00 & 0.15 & 1.00 & 0.03 \\ 
   & 4.00 & 6.91 &  & 6.69 &  & 5.13 &  & 1.51 &  & 0.03 &  & 0.00 &  & 0.00 &  & 0.00 &  \\ 
   &  & 0.80 & 0.98 & 0.81 & 0.98 & 0.83 & 0.98 & 0.91 & 0.95 & 1.00 & 0.61 & 1.00 & 0.31 & 1.00 & 0.10 & 1.00 & 0.01 \\ 
   \hline
\end{tabular}
\caption{Lines 1,3,\ldots: Optimal threshold $\delta^*$ for $F_{\text{rb}}$ with varying values of $d_1$ and $\Delta$; $c=0$ and $d_0=0.1$ fixed. Lines 2,4,\ldots: Corresponding conditional probabilities $\pi_{1|1}$ and $\pi_{0|0}$}. \label{tab2:ex:fgen}
\end{table}
\begin{table}
\scriptsize 
\centering
\begin{tabular}{rr|rrrrrrrrrrrrrrrr}
  \hline
    & & \multicolumn{16}{c}{$\pi=\PP(Y=1)$}   \\
$\Delta$ & $d_0$ & $0.0001$ && $0.001$ && $0.01$ && $0.1$ && $0.9$ && $0.99$ && $0.999$ && $0.9999$  \\ 
\hline
   & 1.00 & 1.15 &  & 1.15 &  & 1.14 &  & 1.06 &  & 0.18 &  & 0.02 &  & 0.00 &  & 0.00 &  \\ 
   &  & 0.49 & 0.70 & 0.49 & 0.70 & 0.50 & 0.70 & 0.55 & 0.64 & 1.00 & 0.00 & 1.00 & 0.00 & 1.00 & 0.00 & 1.00 & 0.00 \\ 
   & 2.00 & 0.68 &  & 0.68 &  & 0.68 &  & 0.63 &  & 0.10 &  & 0.01 &  & 0.00 &  & 0.00 &  \\ 
0.5&  & 0.84 & 0.31 & 0.84 & 0.31 & 0.85 & 0.30 & 0.88 & 0.25 & 1.00 & 0.00 & 1.00 & 0.00 & 1.00 & 0.00 & 1.00 & 0.00 \\ 
   & 0.50 & 2.05 &  & 2.05 &  & 2.03 &  & 1.86 &  & 0.33 &  & 0.04 &  & 0.00 &  & 0.00 &  \\ 
   &  & 0.12 & 0.95 & 0.12 & 0.95 & 0.12 & 0.95 & 0.16 & 0.93 & 0.99 & 0.03 & 1.00 & 0.00 & 1.00 & 0.00 & 1.00 & 0.00 \\ 
   & 0.50 & 2.05 &  & 2.03 &  & 1.92 &  & 1.26 &  & 0.06 &  & 0.01 &  & 0.00 &  & 0.00 &  \\ 
   &  & 0.12 & 0.95 & 0.12 & 0.95 & 0.15 & 0.94 & 0.42 & 0.76 & 1.00 & 0.00 & 1.00 & 0.00 & 1.00 & 0.00 & 1.00 & 0.00 \\ 
   \hline
   & 1.00 & 1.30 &  & 1.30 &  & 1.26 &  & 0.96 &  & 0.05 &  & 0.01 &  & 0.00 &  & 0.00 &  \\ 
   &  & 0.59 & 0.78 & 0.59 & 0.78 & 0.61 & 0.77 & 0.70 & 0.68 & 1.00 & 0.01 & 1.00 & 0.00 & 1.00 & 0.00 & 1.00 & 0.00 \\ 
   & 2.00 & 0.74 &  & 0.74 &  & 0.72 &  & 0.60 &  & 0.04 &  & 0.00 &  & 0.00 &  & 0.00 &  \\ 
1  &  & 0.79 & 0.58 & 0.79 & 0.58 & 0.79 & 0.57 & 0.84 & 0.50 & 1.00 & 0.00 & 1.00 & 0.00 & 1.00 & 0.00 & 1.00 & 0.00 \\ 
   & 0.50 & 2.31 &  & 2.30 &  & 2.19 &  & 1.44 &  & 0.06 &  & 0.01 &  & 0.00 &  & 0.00 &  \\ 
   &  & 0.37 & 0.91 & 0.37 & 0.91 & 0.39 & 0.90 & 0.55 & 0.81 & 1.00 & 0.01 & 1.00 & 0.00 & 1.00 & 0.00 & 1.00 & 0.00 \\ 
   & 0.50 & 2.31 &  & 2.30 &  & 2.19 &  & 1.44 &  & 0.06 &  & 0.01 &  & 0.00 &  & 0.00 &  \\ 
   &  & 0.37 & 0.91 & 0.37 & 0.91 & 0.39 & 0.90 & 0.55 & 0.81 & 1.00 & 0.01 & 1.00 & 0.00 & 1.00 & 0.00 & 1.00 & 0.00 \\ 
   \hline
   & 1.00 & 1.60 &  & 1.60 &  & 1.55 &  & 1.15 &  & 0.05 &  & 0.01 &  & 0.00 &  & 0.00 &  \\ 
   &  & 0.78 & 0.89 & 0.78 & 0.89 & 0.78 & 0.89 & 0.82 & 0.86 & 0.99 & 0.32 & 1.00 & 0.05 & 1.00 & 0.00 & 1.00 & 0.00 \\ 
   & 2.00 & 0.85 &  & 0.85 &  & 0.83 &  & 0.68 &  & 0.04 &  & 0.00 &  & 0.00 &  & 0.00 &  \\ 
2  &  & 0.86 & 0.82 & 0.86 & 0.82 & 0.86 & 0.82 & 0.88 & 0.79 & 1.00 & 0.28 & 1.00 & 0.04 & 1.00 & 0.00 & 1.00 & 0.00 \\ 
   & 0.50 & 2.98 &  & 2.97 &  & 2.81 &  & 1.78 &  & 0.06 &  & 0.01 &  & 0.00 &  & 0.00 &  \\ 
   &  & 0.68 & 0.94 & 0.68 & 0.94 & 0.69 & 0.94 & 0.76 & 0.90 & 0.99 & 0.35 & 1.00 & 0.06 & 1.00 & 0.00 & 1.00 & 0.00 \\ 
   & 0.50 & 2.98 &  & 2.97 &  & 2.81 &  & 1.78 &  & 0.06 &  & 0.01 &  & 0.00 &  & 0.00 &  \\ 
   &  & 0.68 & 0.94 & 0.68 & 0.94 & 0.69 & 0.94 & 0.76 & 0.90 & 0.99 & 0.35 & 1.00 & 0.06 & 1.00 & 0.00 & 1.00 & 0.00 \\ 
   \hline
   & 1.00 & 1.82 &  & 1.82 &  & 1.75 &  & 1.28 &  & 0.05 &  & 0.01 &  & 0.00 &  & 0.00 &  \\ 
   &  & 0.90 & 0.96 & 0.90 & 0.96 & 0.91 & 0.95 & 0.92 & 0.94 & 0.99 & 0.70 & 1.00 & 0.40 & 1.00 & 0.15 & 1.00 & 0.03 \\ 
   & 2.00 & 0.94 &  & 0.93 &  & 0.91 &  & 0.74 &  & 0.04 &  & 0.00 &  & 0.00 &  & 0.00 &  \\ 
3  &  & 0.94 & 0.93 & 0.94 & 0.93 & 0.94 & 0.93 & 0.95 & 0.92 & 0.99 & 0.67 & 1.00 & 0.37 & 1.00 & 0.13 & 1.00 & 0.03 \\ 
   & 0.50 & 3.52 &  & 3.50 &  & 3.31 &  & 2.04 &  & 0.06 &  & 0.01 &  & 0.00 &  & 0.00 &  \\ 
   &  & 0.86 & 0.97 & 0.86 & 0.97 & 0.86 & 0.97 & 0.90 & 0.96 & 0.99 & 0.72 & 1.00 & 0.41 & 1.00 & 0.16 & 1.00 & 0.03 \\ 
   & 0.50 & 3.52 &  & 3.50 &  & 3.31 &  & 2.04 &  & 0.06 &  & 0.01 &  & 0.00 &  & 0.00 &  \\ 
   &  & 0.86 & 0.97 & 0.86 & 0.97 & 0.86 & 0.97 & 0.90 & 0.96 & 0.99 & 0.72 & 1.00 & 0.41 & 1.00 & 0.16 & 1.00 & 0.03 \\ 
   \hline
\end{tabular}
\caption{Lines 1,3,\ldots: Optimal threshold $\delta^*$ for $\text{F}_{\text{rb}}$ with varying values of $d_0$ and $\Delta$; $c=1$ and $d_1=1 (d_1=4 \text{ for last entry})$ fixed. Lines 2,4,\ldots: Corresponding conditional probabilities $\pi_{1|1}$ and $\pi_{0|0}$}. \label{tab3:ex:fgen}
\end{table}

\clearpage
\section{Additional tables for subsection \ref{ex:mccgen}} \label{tab:ex:mccgen}

\begin{table}[ht]
\scriptsize 
\centering
\begin{tabular}{rr|rrrrrrrrrrrrrrrr}
  \hline
  & & \multicolumn{16}{c}{$\pi=\PP(Y=1)$}   \\
$\Delta$ & $d$ & $0.0001$ && $0.001$ && $0.01$ && $0.1$ && $0.9$ && $0.99$ && $0.999$ && $0.9999$  \\ 
  \hline
   & 0.00 & 1.25 &  & 1.25 &  & 1.24 &  & 1.19 &  & 0.84 &  & 0.80 &  & 0.80 &  & 0.80 &  \\ 
   &  & 0.42 & 0.76 & 0.42 & 0.76 & 0.43 & 0.75 & 0.46 & 0.73 & 0.72 & 0.46 & 0.75 & 0.43 & 0.76 & 0.42 & 0.76 & 0.42 \\ 
   & 0.01 & 1.22 &  & 1.22 &  & 1.21 &  & 1.16 &  & 0.86 &  & 0.83 &  & 0.82 &  & 0.82 &  \\ 
   &  & 0.44 & 0.74 & 0.44 & 0.74 & 0.45 & 0.74 & 0.48 & 0.71 & 0.71 & 0.48 & 0.74 & 0.45 & 0.74 & 0.44 & 0.74 & 0.44 \\ 
   & 0.05 & 1.15 &  & 1.15 &  & 1.14 &  & 1.11 &  & 0.90 &  & 0.88 &  & 0.87 &  & 0.87 &  \\ 
0.5&  & 0.49 & 0.70 & 0.49 & 0.70 & 0.50 & 0.70 & 0.52 & 0.68 & 0.68 & 0.51 & 0.70 & 0.49 & 0.70 & 0.49 & 0.70 & 0.49 \\ 
   & 0.10 & 1.11 &  & 1.10 &  & 1.10 &  & 1.08 &  & 0.92 &  & 0.91 &  & 0.90 &  & 0.90 &  \\ 
   &  & 0.52 & 0.68 & 0.52 & 0.67 & 0.52 & 0.67 & 0.54 & 0.66 & 0.66 & 0.54 & 0.67 & 0.52 & 0.67 & 0.52 & 0.67 & 0.52 \\ 
   & 0.50 & 1.03 &  & 1.03 &  & 1.03 &  & 1.03 &  & 0.97 &  & 0.97 &  & 0.97 &  & 0.97 &  \\ 
   &  & 0.58 & 0.62 & 0.58 & 0.62 & 0.58 & 0.62 & 0.58 & 0.62 & 0.62 & 0.58 & 0.62 & 0.57 & 0.62 & 0.57 & 0.62 & 0.57 \\ 
   & 1.00 & 1.02 &  & 1.02 &  & 1.02 &  & 1.01 &  & 0.99 &  & 0.98 &  & 0.98 &  & 0.98 &  \\ 
   &  & 0.58 & 0.61 & 0.58 & 0.61 & 0.58 & 0.61 & 0.59 & 0.61 & 0.61 & 0.59 & 0.61 & 0.59 & 0.61 & 0.58 & 0.61 & 0.58 \\ 
   \hline
   & 0.00 & 2.57 &  & 2.56 &  & 2.46 &  & 1.86 &  & 0.54 &  & 0.41 &  & 0.39 &  & 0.39 &  \\ 
   &  & 0.33 & 0.93 & 0.33 & 0.93 & 0.34 & 0.92 & 0.45 & 0.87 & 0.87 & 0.45 & 0.92 & 0.34 & 0.93 & 0.33 & 0.93 & 0.33 \\ 
   & 0.01 & 2.11 &  & 2.11 &  & 2.06 &  & 1.71 &  & 0.58 &  & 0.48 &  & 0.47 &  & 0.47 &  \\ 
   &  & 0.40 & 0.89 & 0.40 & 0.89 & 0.41 & 0.89 & 0.49 & 0.85 & 0.85 & 0.48 & 0.89 & 0.41 & 0.89 & 0.40 & 0.89 & 0.40 \\ 
   & 0.05 & 1.62 &  & 1.62 &  & 1.61 &  & 1.46 &  & 0.69 &  & 0.62 &  & 0.62 &  & 0.62 &  \\ 
1  &  & 0.51 & 0.84 & 0.51 & 0.84 & 0.51 & 0.84 & 0.55 & 0.81 & 0.81 & 0.55 & 0.83 & 0.51 & 0.84 & 0.51 & 0.84 & 0.51 \\ 
   & 0.10 & 1.43 &  & 1.42 &  & 1.41 &  & 1.32 &  & 0.76 &  & 0.71 &  & 0.70 &  & 0.70 &  \\ 
   &  & 0.56 & 0.80 & 0.56 & 0.80 & 0.56 & 0.80 & 0.59 & 0.78 & 0.78 & 0.59 & 0.80 & 0.56 & 0.80 & 0.56 & 0.80 & 0.56 \\ 
   & 0.50 & 1.13 &  & 1.13 &  & 1.12 &  & 1.10 &  & 0.91 &  & 0.89 &  & 0.89 &  & 0.89 &  \\ 
   &  & 0.65 & 0.73 & 0.65 & 0.73 & 0.65 & 0.73 & 0.66 & 0.72 & 0.72 & 0.66 & 0.73 & 0.65 & 0.73 & 0.65 & 0.73 & 0.65 \\ 
   & 1.00 & 1.07 &  & 1.07 &  & 1.07 &  & 1.05 &  & 0.95 &  & 0.94 &  & 0.94 &  & 0.94 &  \\ 
   &  & 0.67 & 0.71 & 0.67 & 0.71 & 0.67 & 0.71 & 0.67 & 0.71 & 0.71 & 0.67 & 0.71 & 0.67 & 0.71 & 0.67 & 0.71 & 0.67 \\ 
   \hline
   & 0.00 & 100.00 &  & 62.90 &  & 20.90 &  & 4.55 &  & 0.22 &  & 0.05 &  & 0.02 &  & 0.01 &  \\ 
   &  & 0.10 & 1.00 & 0.14 & 1.00 & 0.30 & 0.99 & 0.60 & 0.96 & 0.96 & 0.60 & 0.99 & 0.30 & 1.00 & 0.14 & 1.00 & 0.10 \\ 
   & 0.01 & 8.56 &  & 8.47 &  & 7.63 &  & 3.81 &  & 0.26 &  & 0.13 &  & 0.12 &  & 0.12 &  \\ 
   &  & 0.47 & 0.98 & 0.47 & 0.98 & 0.49 & 0.98 & 0.63 & 0.95 & 0.95 & 0.63 & 0.98 & 0.49 & 0.98 & 0.47 & 0.98 & 0.47 \\ 
   & 0.05 & 3.75 &  & 3.73 &  & 3.60 &  & 2.64 &  & 0.38 &  & 0.28 &  & 0.27 &  & 0.27 &  \\ 
2  &  & 0.63 & 0.95 & 0.63 & 0.95 & 0.64 & 0.95 & 0.70 & 0.93 & 0.93 & 0.70 & 0.95 & 0.64 & 0.95 & 0.63 & 0.95 & 0.63 \\ 
   & 0.10 & 2.65 &  & 2.65 &  & 2.59 &  & 2.11 &  & 0.47 &  & 0.39 &  & 0.38 &  & 0.38 &  \\ 
   &  & 0.70 & 0.93 & 0.70 & 0.93 & 0.70 & 0.93 & 0.73 & 0.92 & 0.92 & 0.73 & 0.93 & 0.70 & 0.93 & 0.70 & 0.93 & 0.70 \\ 
   & 0.50 & 1.42 &  & 1.42 &  & 1.41 &  & 1.33 &  & 0.76 &  & 0.71 &  & 0.70 &  & 0.70 &  \\ 
   &  & 0.80 & 0.88 & 0.80 & 0.88 & 0.80 & 0.88 & 0.80 & 0.87 & 0.87 & 0.80 & 0.88 & 0.80 & 0.88 & 0.79 & 0.88 & 0.79 \\ 
   & 1.00 & 1.22 &  & 1.22 &  & 1.22 &  & 1.17 &  & 0.85 &  & 0.82 &  & 0.82 &  & 0.82 &  \\ 
   &  & 0.82 & 0.86 & 0.82 & 0.86 & 0.82 & 0.86 & 0.82 & 0.86 & 0.86 & 0.82 & 0.86 & 0.82 & 0.86 & 0.82 & 0.86 & 0.82 \\ 
   \hline
   & 0.00 & 5860.00 &  & 700.00 &  & 80.30 &  & 8.23 &  & 0.12 &  & 0.01 &  & 0.00 &  & 0.00 &  \\ 
   &  & 0.43 & 1.00 & 0.64 & 1.00 & 0.82 & 1.00 & 0.93 & 0.99 & 0.99 & 0.93 & 1.00 & 0.82 & 1.00 & 0.64 & 1.00 & 0.44 \\ 
   & 0.01 & 37.10 &  & 35.80 &  & 26.60 &  & 6.98 &  & 0.14 &  & 0.04 &  & 0.03 &  & 0.03 &  \\ 
   &  & 0.86 & 1.00 & 0.87 & 1.00 & 0.88 & 1.00 & 0.94 & 0.99 & 0.99 & 0.93 & 1.00 & 0.88 & 1.00 & 0.87 & 1.00 & 0.86 \\ 
   & 0.05 & 9.23 &  & 9.15 &  & 8.44 &  & 4.56 &  & 0.22 &  & 0.12 &  & 0.11 &  & 0.11 &  \\ 
4  &  & 0.93 & 0.99 & 0.93 & 0.99 & 0.93 & 0.99 & 0.95 & 0.99 & 0.99 & 0.95 & 0.99 & 0.93 & 0.99 & 0.93 & 0.99 & 0.93 \\ 
   & 0.10 & 5.27 &  & 5.24 &  & 5.01 &  & 3.38 &  & 0.30 &  & 0.20 &  & 0.19 &  & 0.19 &  \\ 
   &  & 0.94 & 0.99 & 0.94 & 0.99 & 0.94 & 0.99 & 0.96 & 0.99 & 0.99 & 0.96 & 0.99 & 0.94 & 0.99 & 0.94 & 0.99 & 0.94 \\ 
   & 0.50 & 1.90 &  & 1.89 &  & 1.87 &  & 1.66 &  & 0.60 &  & 0.54 &  & 0.53 &  & 0.53 &  \\ 
   &  & 0.97 & 0.98 & 0.97 & 0.98 & 0.97 & 0.98 & 0.97 & 0.98 & 0.98 & 0.97 & 0.98 & 0.97 & 0.98 & 0.97 & 0.98 & 0.97 \\ 
   & 1.00 & 1.45 &  & 1.45 &  & 1.44 &  & 1.35 &  & 0.74 &  & 0.69 &  & 0.69 &  & 0.69 &  \\ 
   &  & 0.97 & 0.98 & 0.97 & 0.98 & 0.97 & 0.98 & 0.97 & 0.98 & 0.98 & 0.97 & 0.98 & 0.97 & 0.98 & 0.97 & 0.98 & 0.97 \\ 
   \hline
\end{tabular}
\caption{Lines 1,3,\ldots: Optimal threshold $\delta^*$ for $\text{MCC}_{\text{rb}}$ with varying values of $d$ and $\Delta$. Lines 2,4,\ldots: Corresponding conditional probabilities $\pi_{1|1}$ and $\pi_{0|0}$.} \label{tab1:ex:mccgen}
\end{table}

\clearpage
\section{Additional plots for subsections \ref{ex:fgen} and \ref{ex:mccgen}} \label{app-qda}

\begin{figure}[h]
\centerline{\includegraphics[scale=0.7]{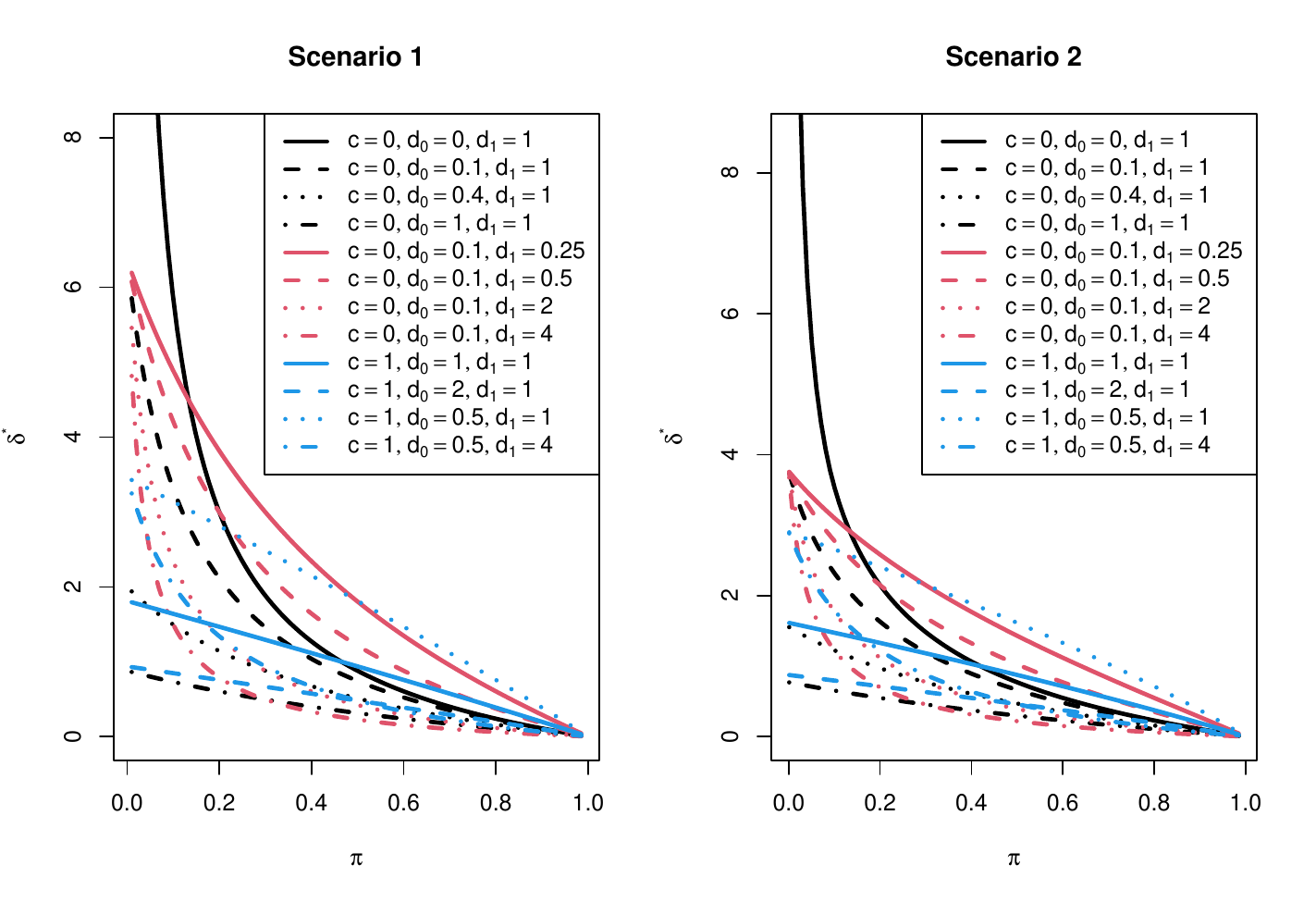}}
\caption{Plots of the optimal threshold $\delta^\ast$ as a function of $\pi$ for $\text{F}_{\text{rb}}$ with different choices of the parameters in the setting of Example \ref{ex:qda}.}  \label{fig:qda-frb}
\end{figure}

\begin{figure}   
\centerline{\includegraphics[scale=0.7]{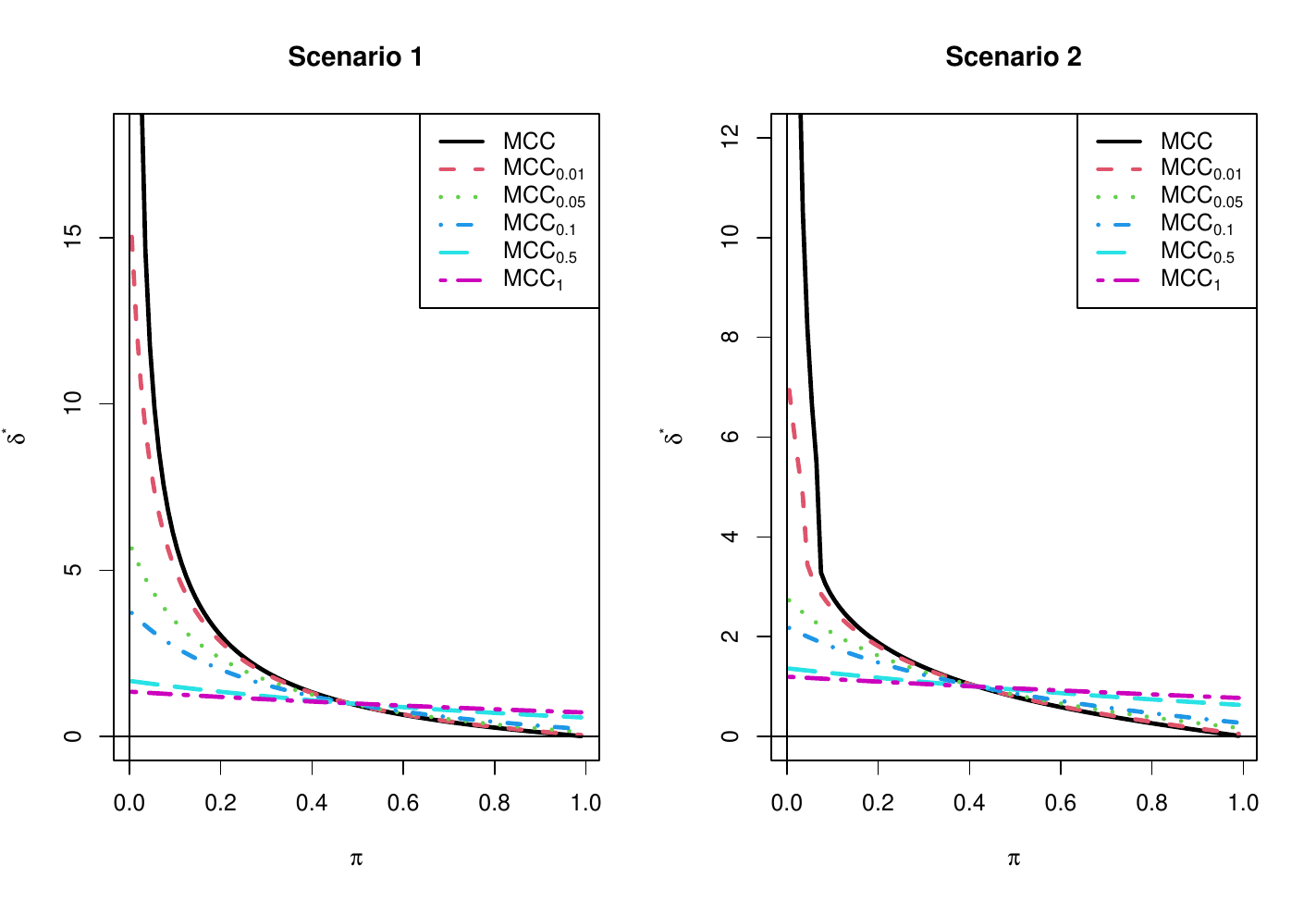}}
\caption{Plots of the optimal threshold $\delta^\ast$ as a function of $\pi$ for $\text{MCC}_{\text{rb}}$ with different choices of the parameter $d$ in the setting of Example \ref{ex:qda}.}  \label{fig:qda-mcc}
\end{figure}


\clearpage
\setlength{\tabcolsep}{12pt} 
\section{Additional tables for Section \ref{data-ex}}\label{appsec:dataexampletrainingdata}

\begin{table}[hb]
\centering
\begin{tabular}{llrrrrrr}
  \hline
 &  & JAC & MCC & Yule & $F_{1.5}$ & $F_{0.5}$ & Kappa \\ 
  \hline
training & value & 0.184 & 0.264 & 0.931 & 0.322 & 0.339 & 0.263 \\ 
  $\hat{\pi}=0.067$ & $\tilde\delta$ & 0.123 & 0.135 & 0.986 & 0.104 & 0.169 & 0.135 \\ 
   & $\hat\pi_{1|1}$ & 0.314 & 0.286 & 0.000 & 0.385 & 0.209 & 0.286 \\ 
   & $\hat\pi_{0|0}$ & 0.950 & 0.959 & 1.000 & 0.911 & 0.978 & 0.959 \\ 
  \hline
subset.train & value & 0.086 & 0.148 & 0.986 & 0.179 & 0.165 & 0.145 \\ 
  $\hat{\pi}=0.014$ & $\tilde\delta$ & 0.043 & 0.036 & 0.907 & 0.035 & 0.061 & 0.050 \\ 
   & $\hat\pi_{1|1}$ & 0.195 & 0.236 & 0.001 & 0.244 & 0.133 & 0.168 \\ 
   & $\hat\pi_{0|0}$ & 0.982 & 0.974 & 1.000 & 0.972 & 0.991 & 0.986 \\ 
 \hline\hline
training & value & 0.283 & 0.400 & 1.000 & 0.477 & 0.457 & 0.398 \\ 
  $\hat{\pi}=0.067$ & $\tilde\delta$ & 0.220 & 0.190 & 0.768 & 0.154 & 0.383 & 0.238 \\ 
   & $\hat\pi_{1|1}$ & 0.489 & 0.528 & 0.001 & 0.577 & 0.286 & 0.468 \\ 
   & $\hat\pi_{0|0}$ & 0.948 & 0.938 & 1.000 & 0.921 & 0.982 & 0.953 \\ 
   \hline
subset.train & value & 0.130 & 0.221 & 1.000 & 0.252 & 0.218 & 0.217 \\ 
  $\hat{\pi}=0.014$ & $\tilde\delta$ & 0.126 & 0.114 & 0.415 & 0.107 & 0.153 & 0.126 \\ 
   & $\hat\pi_{1|1}$ & 0.274 & 0.300 & 0.001 & 0.314 & 0.216 & 0.274 \\ 
   & $\hat\pi_{0|0}$ & 0.984 & 0.981 & 1.000 & 0.979 & 0.989 & 0.984 \\ 
   \hline
\end{tabular}
\caption{Training data: Maximal value of performance measure, optimal cutoff $\tilde\delta$,  TPR $\hat\pi_{1|1}$ and TNR $\hat\pi_{0|0}$ of classifiers that empirically maximize the various  performance measures used in the data example. 
Upper panel: Logistic regression. Lower panel: Random forests. \label{tab-data-ex1}}
\end{table}

\bigskip

\begin{table}
\centering
\begin{tabular}{llrrrrrr}
  \hline
 & $(d_0,d_1)$ & $(0,1)$ & $(0.1,1)$ & $(0.2,1)$ & $(0.5,1)$ & $(0.8,1)$ & $(0.2,2)$ \\ 
  \hline
training & value & 0.311 & 0.326 & 0.352 & 0.436 & 0.503 & 0.370 \\ 
  $\hat{\pi}=0.067$ & $\tilde\delta$ & 0.123 & 0.103 & 0.088 & 0.057 & 0.048 & 0.080 \\ 
   & $\hat\pi_{1|1}$ & 0.314 & 0.389 & 0.473 & 0.700 & 0.773 & 0.523 \\ 
   & $\hat\pi_{0|0}$ & 0.950 & 0.908 & 0.839 & 0.567 & 0.445 & 0.789 \\ 
  \hline
subset.train & value & 0.158 & 0.249 & 0.289 & 0.387 & 0.464 & 0.295 \\ 
  $\hat{\pi}=0.014$ & $\tilde\delta$ & 0.043 & 0.025 & 0.019 & 0.014 & 0.011 & 0.019 \\ 
   & $\hat\pi_{1|1}$ & 0.195 & 0.352 & 0.478 & 0.625 & 0.738 & 0.478 \\ 
   & $\hat\pi_{0|0}$ & 0.982 & 0.936 & 0.840 & 0.666 & 0.499 & 0.840 \\ 
 \hline\hline
training & value & 0.441 & 0.483 & 0.522 & 0.605 & 0.662 & 0.544 \\ 
  $\hat{\pi}=0.067$ & $\tilde\delta$ & 0.220 & 0.146 & 0.104 & 0.066 & 0.049 & 0.085 \\ 
   & $\hat\pi_{1|1}$ & 0.489 & 0.590 & 0.675 & 0.790 & 0.845 & 0.728 \\ 
   & $\hat\pi_{0|0}$ & 0.948 & 0.915 & 0.872 & 0.778 & 0.711 & 0.836 \\ 
   \hline
subset.train & value & 0.230 & 0.354 & 0.429 & 0.553 & 0.620 & 0.436 \\ 
  $\hat{\pi}=0.014$ & $\tilde\delta$ & 0.126 & 0.047 & 0.031 & 0.014 & 0.010 & 0.031 \\ 
   & $\hat\pi_{1|1}$ & 0.274 & 0.504 & 0.604 & 0.776 & 0.832 & 0.604 \\ 
   & $\hat\pi_{0|0}$ & 0.984 & 0.938 & 0.900 & 0.780 & 0.709 & 0.900 \\ 
   \hline
\end{tabular}
\caption{Training data: Maximal value of performance measure, optimal cutoff $\tilde\delta$,  TPR $\hat\pi_{1|1}$ and TNR $\hat\pi_{0|0}$ of classifiers that empirically maximize the robust F-score with $c=0$ and different choices of $(d_0,d_1)$ used in the data example. 
Upper panel: Logistic regression. Lower panel: Random forests.\label{tab-data-ex2}}
\end{table}

\bigskip

\begin{table}
\centering
\begin{tabular}{llrrrrrr}
  \hline
 & $d$ & $0$ & $0.01$ & $0.05$ & $0.1$ & $0.5$ & $1.0$ \\ 
  \hline
training & value & 0.264 & 0.262 & 0.262 & 0.268 & 0.291 & 0.300 \\ 
  $\hat{\pi}=0.067$ & $\tilde\delta$ & 0.135 & 0.127 & 0.123 & 0.104 & 0.088 & 0.088 \\ 
   & $\hat\pi_{1|1}$ & 0.286 & 0.303 & 0.314 & 0.385 & 0.473 & 0.473 \\ 
   & $\hat\pi_{0|0}$ & 0.959 & 0.954 & 0.950 & 0.911 & 0.839 & 0.839 \\ 
  \hline
subset.train & value & 0.148 & 0.171 & 0.217 & 0.241 & 0.286 & 0.301 \\   
  $\hat{\pi}=0.014$ & $\tilde\delta$ & 0.036 & 0.029 & 0.025 & 0.025 & 0.019 & 0.019 \\ 
   & $\hat\pi_{1|1}$ & 0.236 & 0.303 & 0.352 & 0.352 & 0.478 & 0.478 \\ 
   & $\hat\pi_{0|0}$ & 0.974 & 0.956 & 0.936 & 0.936 & 0.840 & 0.840 \\ 
 \hline\hline
training & value & 0.400 & 0.407 & 0.431 & 0.453 & 0.520 & 0.539 \\
  $\hat{\pi}=0.067$ & $\tilde\delta$ & 0.190 & 0.190 & 0.154 & 0.117 & 0.085 & 0.085 \\ 
   & $\hat\pi_{1|1}$ & 0.528 & 0.528 & 0.577 & 0.647 & 0.728 & 0.728 \\ 
   & $\hat\pi_{0|0}$ & 0.938 & 0.938 & 0.921 & 0.888 & 0.836 & 0.836 \\    \hline
subset.train & value & 0.221 & 0.254 & 0.334 & 0.384 & 0.489 & 0.518 \\    
  $\hat{\pi}=0.014$ & $\tilde\delta$ & 0.114 & 0.063 & 0.033 & 0.031 & 0.017 & 0.017 \\ 
   & $\hat\pi_{1|1}$ & 0.300 & 0.438 & 0.588 & 0.604 & 0.736 & 0.736 \\ 
   & $\hat\pi_{0|0}$ & 0.981 & 0.956 & 0.907 & 0.900 & 0.816 & 0.816 \\ 
   \hline
\end{tabular}
\caption{Training data: Maximal value of performance measure, optimal cutoff $\tilde\delta$,  TPR $\hat\pi_{1|1}$ and TNR $\hat\pi_{0|0}$ of classifiers that empirically maximize the robust MCC with different choices of $d$ used in the data example. 
Upper panel: Logistic regression. Lower panel: Random forests.\label{tab-data-ex3}}
\end{table}

\end{document}